\pgfplotsset{
    x tick style={color=black},
    y tick style={color=black}
}
\renewcommand{\labelenumi}{(\alph{enumi})}
\renewcommand\theenumi\labelenumi
\newtheorem{theorem}{Theorem}
\newtheorem*{theorem*}{Theorem}
\newtheorem{lemma}[theorem]{Lemma}
\newtheorem*{lemma*}{Lemma}
\newcommand{\oea}{$(1 + 1)$~EA\xspace}
\newcommand{\sibea}{$(\mu+1)$~SIBEA\xspace}
\newcommand{\NSGA}{\mbox{NSGA-II}\xspace}
\newcommand{\om}{\textsc{OneMax}\xspace}
\newcommand{\omm}{\textsc{OneMinMax}\xspace}
\newcommand{\onemax}{\om}
\newcommand{\lo}{\textsc{LeadingOnes}\xspace}
\newcommand{\lotz}{\textsc{LeadingOnesTrailingZeroes}\xspace}
\newcommand{\leadingones}{\lo}
\newcommand{\ojzj}{\textsc{OneJumpZeroJump}\xspace}
\newcommand{\cocz}{\textsc{CountingOnesCountingZeroes}\xspace}
\newcommand{\R}{\ensuremath{\mathbb{R}}}
\newcommand{\N}{\ensuremath{\mathbb{N}}} 
\DeclareMathOperator{\nd}{ND}
\DeclareMathOperator{\sd}{SD}
\DeclareMathOperator{\cDis}{cDis}
\DeclareMathOperator{\inx}{in}
\newcommand{\Vinm}{V_{\inx}^-}
\newcommand{\Vinp}{V_{\inx}^+}
\newcommand{\Rnd}{F_1}
\newcommand{\Rbest}{P_t^{*}}
\newcommand{\Rtp}{P_t^{p}}
\newcommand{\pcross}{p^{**}}
\newcommand{\eps}{\varepsilon} 
\newcommand{\SMS}{\mbox{SMS-EMOA}\xspace}
\let\originalleft\left
\let\originalright\right
\renewcommand{\left}{\mathopen{}\mathclose\bgroup\originalleft}
\renewcommand{\right}{\aftergroup\egroup\originalright}
\begin{document}
\sloppy
\date{}
\title{Mathematical Runtime Analysis for the Non-Dominated Sorting Genetic Algorithm II (\mbox{NSGA-II})\thanks{Extended version of a paper that appeared at AAAI 2022~\cite{ZhengLD22}, and that was conducted when the first author was with Southern University of Science and Technology. This version contains all proofs, many of them revised and improved. In particular, the runtime result for the \mbox{NSGA-II} with tournament selection on \lotz now holds for $N \ge 4(n+1)$ instead of $N \ge 5(n+1)$. In addition, in all upper bounds we now also regard binary tournament selection as in Deb's implementation of the \mbox{NSGA-II} (building the $N$ tournaments from two permutations of the population). We added tail bounds for the runtime guarantees. The experimental section was extended as well. }
}
\author{Weijie Zheng\\International Research Institute for Artificial Intelligence\\
         School of Computer Science and Technology\\
       Harbin Institute of Technology\\
        Shenzhen, China
\and Benjamin Doerr\thanks{Corresponding author.}\\ Laboratoire d'Informatique (LIX)\\ \'Ecole Polytechnique, CNRS\\ Institute Polytechnique de Paris\\ Palaiseau, France
}

\maketitle

\begin{abstract}
The non-dominated sorting genetic algorithm II (\mbox{NSGA-II}) is the most intensively used multi-objective evolutionary algorithm (MOEA) in real-world applications. However, in contrast to several simple MOEAs analyzed also via mathematical means, no such study exists for the \mbox{NSGA-II} so far. In this work, we show that mathematical runtime analyses are feasible also for the \mbox{NSGA-II}. As particular results, we prove that with a population size four times larger than the size of the Pareto front, the \mbox{NSGA-II} with two classic mutation operators and four different ways to select the parents satisfies the same asymptotic runtime guarantees as the SEMO and GSEMO algorithms on the basic \omm and \lotz benchmarks. However, if the population size is only equal to the size of the Pareto front, then the \mbox{NSGA-II} cannot efficiently compute the full Pareto front: for an exponential number of iterations, the population will always miss a constant fraction of the Pareto front. Our experiments confirm the above findings.
\end{abstract}

\section{Introduction}

Many real-world problems need to optimize multiple conflicting objectives simultaneously, see~\cite{ZhouQLZSZ11} for a discussion of the different areas in which such problems arise. Instead of computing a single good solution, a common approach to such multi-objective optimization problems is to compute a set of interesting solutions so that a decision maker can select the most desirable one from these. Multi-objective evolutionary algorithms (MOEAs) are a natural choice for such problems due to their population-based nature. Such multi-objective evolutionary algorithms (MOEAs) have been successfully used in many real-world applications~\cite{ZhouQLZSZ11}. 

Unfortunately, the theoretical understanding of MOEAs falls far behind their success in practice, and this discrepancy is even larger than in single-objective evolutionary computation, where the last twenty years have seen some noteworthy progress on the theory side~\cite{NeumannW10,AugerD11,Jansen13,DoerrN20}. After some early theoretical works on convergence properties, e.g.,~\cite{Rudolph98ep}, the first mathematical runtime analysis of an MOEA was conducted by Laumanns et al.~\cite{LaumannsTZWD02,LaumannsTZ04}. They analyzed the runtime of the \emph{simple evolutionary multi-objective optimizer (SEMO)}, a multi-objective counterpart of the \emph{randomized local search} heuristic, on the \cocz and \lotz benchmarks, which are bi-objective analogues of the classic (single-objective) \onemax and \leadingones benchmark. Around the same time, Giel~\cite{Giel03} analyzed the \emph{global SEMO (GSEMO)}, the multi-objective counterpart of the \oea, on the \lotz function. 

Subsequent theoretical works majorly focused on variants of these algorithms and analyzed their runtime on the \cocz and \lotz benchmarks, on variants of them, on new benchmarks, and on combinatorial optimization problems~\cite{QianYZ13,BianQT18ijcaigeneral,RoostapourNNF19,QianBF20,BianFQY20,DoerrZ21aaai}. We note that the (G)SEMO algorithm keeps all non-dominated solutions in the population and discards all others, which can lead to impractically large population sizes. There are three theory works~\cite{BrockhoffFN08,NguyenSN15,DoerrGN16} on the runtime of a simple hypervolume-based MOEA called \emph{($\mu+1$) simple indicator-based evolutionary algorithm} (($\mu+1$)-SIBEA), regarding both classic benchmarks and problems designed to highlight particular strengths and weaknesses of this algorithm. As the SEMO and GSEMO, the \sibea also creates a single offspring per generation; different from the former, it works with a fixed population size~$\mu$. 

Recently,  also decomposition-based multi-objective evolutionary algorithms were analyzed~\cite{LiZZZ16,HuangZCH19,HuangZ20}. These algorithms decompose the multi-objective problem into several related single-objective problems and then solve the single-objective problems in a co-evolutionary manner. This direction is fundamentally different from the above works and our research. Since not primarily focused on multi-objective optimization, we also do not discuss further the successful line of works that solve constrained single-objective problems by turning the constraint violation into a second objective, see, e.g.,~\cite{NeumannW06ants,FriedrichHHNW10,NeumannRS11,FriedrichN15,QianYZ15,QianSYT17,QianYTYZ19,Crawford19,DoerrDNNS20,Crawford21}.

Unfortunately, most of the algorithms discussed in these theoretical works are far from the MOEAs used in practice. As pointed out in the survey~\cite{ZhouQLZSZ11}, the majority of the MOEAs used in research and applications builds on the framework of the \emph{non-dominated sorting genetic algorithm II (\mbox{NSGA-II})}~\cite{DebPAM02}. 
This algorithm works with a population of fixed size~$N$. It uses a complete order defined by the non-dominated sorting and the crowding distance to compare individuals. In each generation, $N$ offspring are generated from the parent population and the $N$ best individuals (according to the complete order) are selected as the new parent population. This approach is thus substantially different from the (G)SEMO algorithm and hypervolume-based approaches (and naturally completely different from decomposition-based methods), see the features of these algorithms described in the above two paragraphs.
 
Both the predominance in practice and the fundamentally different working principles ask for a rigorous understanding of the \mbox{NSGA-II}. However, to the best of our knowledge so far no mathematical runtime analysis for the \mbox{NSGA-II} has appeared.\footnote{By mathematical runtime analysis, we mean the question of how many function evaluations a black-box algorithm takes to achieve a certain goal. The computational complexity of the operators used by the \mbox{NSGA-II}, in particular, how to most efficiently implement the non-dominated sorting routine, is a different question (and one that is well-understood~\cite{DebPAM02}).} We note that the runtime analysis in~\cite{OsunaGNS20} considers a (G)SEMO algorithm that uses the crowding distance as one of several diversity measures used in the selection of the single parent creating an offspring, but due to the differences of the basic algorithms, none of the arguments used there appear helpful in the analysis of the \mbox{NSGA-II}.   

\textbf{Our Contributions.} This paper conducts the first mathematical runtime analysis of the \mbox{NSGA-II}. We regard the \mbox{NSGA-II} with four different parent selection strategies (choosing each individual as a parent once, choosing parents independently and uniformly at random, and two ways of choosing the parents via binary tournaments) and with two classic mutation operators (one-bit mutation and standard bit-wise mutation), but in this first work without crossover (we remark that crossover is very little understood from the runtime perspective in MOEAs, the only works prior to ours we are aware of are~\cite{NeumannT10,QianYZ13,HuangZCH19}). As previous theoretical works, we analyze how long the \NSGA takes to cover the full Pareto front, that is, we estimate the number of iterations until the parent population contains an individual for each objective value of the Pareto front. 

When trying to determine the runtime of the \NSGA, we first note that the selection mechanism of the \NSGA may remove all individuals with some fixed objective value on the front. In other words, the fact that a certain objective value on the Pareto front was found in some iteration does not mean that this is not lost in some later iteration. This is one of the substantial differences to the (G)SEMO algorithm. We prove that if the population size $N$ is at least four times larger than the size of the Pareto front, then both for the \omm and the \lotz benchmarks, such a loss of Pareto front points cannot occur. With this insight, we then show that each of these eight variants of the \mbox{NSGA-II} computes the full Pareto front of the \omm benchmark in an expected number of $O(n\log n)$ iterations (Theorems~\ref{thm:ommeasy} and~\ref{thm:ommbinary}) and the front of the \lotz benchmark in $O(n^2)$ iterations (Theorems~\ref{thm:lotzeasy} and~\ref{thm:lotzbinary}). When $N = \Theta(n)$, the corresponding runtime guarantees in terms of fitness evaluations, $O(Nn\log n) = O(n^2 \log n)$ and $O(Nn^2)= O(n^3)$, have the same asymptotic order as those proven previously for the SEMO, GSEMO, and \sibea (when $\mu \ge n+1$ and when $\mu = \Theta(n)$ for the \sibea). We note that the benchmarks \omm and \lotz are the two most intensively studied benchmarks in the runtime analysis of MOEAs. In this first runtime analysis work on the \NSGA, we therefore concentrated on these two benchmarks to allow a good comparison with the known performance of other MOEAs.

Using a population size larger than the size of the Pareto front is necessary. We prove that if the population size is equal to the size of the Pareto front, then the \mbox{NSGA-II} (applying one-bit mutation once to each parent) regularly loses points on the Pareto front of \omm. This effect is strong enough so that with high probability for an exponential time each generation of the \mbox{NSGA-II} does not cover a constant fraction of the Pareto front of \omm. 

Our short experimental analysis confirms these findings and gives some quantitative estimates for which mathematical analyses are not precise enough. For example, we observe that also with population sizes smaller than what is required for our theoretical analysis (four times the size of the Pareto front), the \NSGA efficiently  covered the Pareto front of the \omm and \lotz benchmarks. With suitable population sizes, the \NSGA beats the GSEMO algorithm on these benchmarks. Complementing our negative result, we observe that the fraction of the Pareto front not covered when using a population size equal to the front size is around 20\% for \omm and 40\% for \lotz. Also without covering the full Pareto front, MOEAs can serve their purpose of proposing to a decision maker a set of interesting solutions. With this perspective, we also regard experimentally the sets of solutions evolved by the \NSGA when the population size is only equal to the size of the Pareto front. For both benchmarks, we observe that after a moderate runtime, the population contains the two extremal solutions and covers in a very evenly manner the rest of the Pareto front. 

Overall, this work shows that the \mbox{NSGA-II} despite its higher complexity (parallel generation of offspring, selection based on non-dominated sorting and crowding distance) admits mathematical runtime analyses in a similar fashion as done before for simpler MOEAs, which hopefully will lead to a deeper understanding of the working principles of this important algorithm. 

\textbf{Subsequent works:} We note that the conference version~\cite{ZhengLD22} of this work has already inspired a substantial amount of subsequent research. We brief{}ly describe these results now. In~\cite{ZhengD22gecco}, the performance of the \NSGA with small population size was analyzed. The main result is that the problem that Pareto front points can be lost can be significantly reduced with a small modification of the selection procedure that was previously analyzed experimentally~\cite{KukkonenD06}, namely to remove individuals in the selection of the next population sequentially, recomputing the crowding distance after each removal. For this setting, an $O(n/N)$ approximation guarantee was proven. In~\cite{BianQ22}, the first runtime analysis of the \NSGA with crossover was conducted, however, no speed-ups from crossover could be shown. Also, significant speed-ups were shown when using larger tournaments than binary tournaments. In~\cite{DoerrQ23tec}, the performance of the \NSGA on the multimodal benchmark \ojzj benchmark~\cite{DoerrZ21aaai} was analyzed. This work shows that the \NSGA also on this multimodal benchmark has a performance asymptotically at least as good as the GSEMO algorithm (when the population size is at least four times the size of the Pareto front). A matching lower bound for this and our result on \omm was proven in~\cite{DoerrQ23LB}. This work in particular shows that the \NSGA in these settings does not profit from population sizes larger than the minimum required population size. Two recent works showed significant performance gains from crossover, one on the \ojzj benchmark~\cite{DoerrQ23crossover} and one on an artificial problem~\cite{DangOSS23aaai}. The first runtime analysis of the \NSGA on a combinatorial problem, namely the bi-objective minimum spanning tree problem previously regarded in~\cite{Neumann07,NeumannW22}, was conducted in~\cite{CerfDHKW23}. The first runtime analysis of the \NSGA for noisy optimization appeared in~\cite{DangOSS23gecco}. The first runtime analysis of the \SMS~\cite{BeumeNE07} (a variant of the \NSGA building on the hyper-volume) was conducted in~\cite{BianZLQ23}. All these works regard bi-objective problems. For the \omm problem in three or more objectives, it was shown in~\cite{ZhengD22arxivmany} that the \NSGA cannot find the full Pareto front in polynomial time and even has difficulties in approximating it. It was shown in~\cite{WiethegerD23} that the NSGA-III does not experience these problems, at least in three objectives. With this recent development, we are confident to claim that our first mathematical runtime analysis for the \NSGA has started a fruitful direction of research. 

The remainder of the paper is organized as follows. The \mbox{NSGA-II} framework is brief{}ly introduced in Section~\ref{sec:pre}. Sections~\ref{sec:omm} and~\ref{sec:lotz} separately show our runtime results of the \mbox{NSGA-II} with large enough population size on the \omm and \lotz functions. Section~\ref{sec:lb} proves the exponential runtime of the NSGA with population size equal to the size of the Pareto front. Our experimental results are discussed in Section~\ref{sec:exp}. Section~\ref{sec:con} concludes this work.

\section{Preliminaries}
\label{sec:pre}
In this section, we give a brief introduction to multi-objective optimization and to the \mbox{NSGA-II} framework. For the simplicity of presentation, we shall concentrate on two objectives, both of which have to be maximized. A bi-objective objective function on some search space $\Omega$ is a pair $f = (f_1, f_2)$ where $f_i : \Omega \to \R$. We write $f(x) = (f_1(x),f_2(x))$ for all $x \in \Omega$. We shall always assume that we have a bit-string representation, that is, that $S = \{0,1\}^n$ for some $n \in \N$. The challenge in multi-objective optimization is that usually there is no solution that maximizes both $f_1$ and $f_2$ and thus is at least as good as all other solutions. 

More precisely, in bi-objective maximization, we say \emph{$x$ weakly dominates~$y$}, denoted by $x \succeq y$, if and only if $f_1(x) \ge f_1(y)$ and $f_2(x) \ge f_2(y)$. We say \emph{$x$ strictly dominates $y$}, denoted by $x \succ y$, if and only if $f_1(x) \ge f_1(y)$ and $f_2(x) \ge f_2(y)$ and at least one of the inequalities is strict. 
We say that a solution is \emph{Pareto-optimal} if it is not strictly dominated by any other solution. The set of objective values of all Pareto optima is called the \emph{Pareto front} of $f$. With this language, the aim in multi-objective optimization is to compute a small set $P$ of Pareto optima such that $f(P) = \{f(x) \mid x \in P\}$ is the Pareto front or is at least a diverse subset of it. Consider an algorithm $A$ optimizing a multi-objective problem $f$ with Pareto front $M$. Let $P_t$ be the population at iteration $t$ and $G_t$ be the number of function evaluations till iteration $t$, then the \emph{time complexity} or \emph{running time} in this paper is the random variable $T_{A}(f)=\inf\{G_t \mid f(P_t) \supseteq M\}$. Usually, we discuss the expected runtime or the runtime with some probability.

\subsection*{The \mbox{NSGA-II}}

When working with a fixed population size, an MOEA must select the next parent population from the combined parent and offspring population by discarding some of these individuals. For this, a complete order on the combined parent and offspring population could be used so that the next parent population is taken in a greedy manner according to this order. Since dominance is only a partial order, the \mbox{NSGA-II}~\cite{DebPAM02} extends the dominance relation to the following complete order. 

In a given population $P \subseteq \{0,1\}^n$, each individual $x$ has both a rank and a crowding distance. The ranks are defined recursively based on the dominance relation. All individuals that are not strictly dominated by another one have rank one. Given that the ranks $1, \dots, k$ are already defined, the individuals of rank $k+1$ are those among the remaining individuals that are not strictly dominated except by individuals of rank $k$ or smaller. This defines a partition of $P$ into sets $F_1, F_2, \dots$ such that $F_i$ contains all individuals with rank $i$. As shown in~\cite{DebPAM02}, this partition can be computed more efficiently than what the above recursive description suggests, namely in quadratic time, see Algorithm~\ref{alg:nds} for details. It is clear that individuals with lower rank are more interesting, so when comparing two individuals of different ranks, the one with lower rank is preferred. 

To compare individuals in the same rank class $F_i$, the crowding distance of these individuals (in $F_i$) is computed, and the individual with larger distance is preferred. Ties are broken randomly. 

\begin{algorithm}[!ht]
    \caption{fast-non-dominated-sort(S)}
    {\small
    \begin{algorithmic}[1]
     \STATEx \textbf{Input:} $S=\{S_1,\dots,S_{|S|}\}$: the set of individuals
     \STATEx \textbf{Output:} $F_1,F_2,\dots$
    \FOR {$i=1,\dots,|S|$}
    \STATE $\nd(S_i)=0$ \emph{\% number of individuals strictly dominating $S_i$}
    \STATE $\sd(S_i)=\emptyset$ \emph{\% set of individuals strictly dominated by $S_i$}
    \ENDFOR
    \FOR {$i=1,\dots,|S|$} \emph{\% compute $\nd$ and $\sd$}
    \FOR {$j=1,\dots,|S|$}
    \IF {$S_i \prec S_j$}
    \STATE {$\nd(S_i)=\nd(S_i)+1$}
    \STATE {$\sd(S_j)=\sd(S_j)\cup\{S_i\}$}
    \ENDIF
    \ENDFOR
    \ENDFOR
    \STATE $F_1=\{S_i \mid \nd(S_i) =0, i=1,2,\dots,|S|\}$
    \STATE $k=1$
    \WHILE{$F_k \ne \emptyset$}
    \STATE $F_{k+1}=\emptyset$
    \FOR {any $s \in F_k$} \emph{\% discount $F_k$ from $\nd$ and $\sd$}
    \FOR {any $s' \in \sd(s)$}
    \STATE $\nd(s')=\nd(s')-1$
    \IF {$\nd(s')=0$}
    \STATE {$F_{k+1}=F_{k+1}\cup \{s'\}$}
    \ENDIF
    \ENDFOR
    \ENDFOR
    \STATE $k=k+1$
    \ENDWHILE
    \end{algorithmic}
    \label{alg:nds}
    }
\end{algorithm}

Algorithm~\ref{alg:cDis} shows how the crowding distance in a given set $S$ is computed. The crowding distance of some $x \in S$ is the sum of the crowding distances $x$ has with respect to each objective function $f_i$. For a given $f_i$, the individuals in $S$ are sorted in order of ascending $f_i$ value (for equal values, a tie-breaking mechanism is needed, but we shall not make any assumption on this, that is, our mathematical results are valid regardless of how these ties are broken). The first individual and the last individual in the sorted list have an infinite crowding distance. For other individuals in the sorted list, their crowding distance with respect to $f_i$ is the difference of the objective values of its left and right neighbor in the sorted list, normalized by the difference between the first and the last.
\begin{algorithm}[tb]
    \caption{crowding-distance($S$)}
     \textbf{Input:} $S=\{S_1,\dots,S_{|S|}\}$: the set of individuals\\
     \textbf{Output:} $\cDis(S)=(\cDis(S_1),\dots,\cDis(S_{|S|}))$, the vector of crowding distances of the individuals in $S$
		
    \begin{algorithmic}[1]
    \STATE $\cDis(S)=(0,\dots,0)$
    \FOR {each objective function $f_i$}
    \STATE {Sort $S$ in order of ascending $f_i$ value: $S_{i.1},\dots,S_{i.{|S|}}$}
    \STATE {$\cDis(S_{i.1})=+\infty, \cDis(S_{i.{|S|}})=+\infty$}
    \FOR {$j=2,\dots, |S|-1$}
    \STATE {$\cDis(S_{i.j})=\cDis(S_{i.j}) + \frac{f_i(S_{i.{j+1}})-f_i(S_{i.{j-1}})}{f_i(S_{i.{|S|}})-f_i(S_{i.1})}$}
    \ENDFOR
    \ENDFOR
    \end{algorithmic}
    \label{alg:cDis}
\end{algorithm}

The whole \mbox{NSGA-II} framework is shown in Algorithm~\ref{alg:NSGA-II}. After the random initialization of the population of size~$N$, the main loop starts with the generation of $N$ offspring (the precise way how this is done is not part of the \mbox{NSGA-II} framework and is mostly left as a design choice to the algorithm user in~\cite{DebPAM02}, although it is suggested to select parents via binary tournaments based the total order described above). Then the total order based on rank and crowding distance is used to remove the worst $N$ individuals in the union of the parent and offspring population. The remaining individuals form the parent population of the next iteration. 
\begin{algorithm}[!ht]
    \caption{\mbox{NSGA-II}}
    \begin{algorithmic}[1]
    \STATE {Uniformly at random generate the initial population $P_0=\{x_1,x_2,\dots,x_N\}$ with $x_i\in\{0,1\}^n,i=1,2,\dots,N.$}\label{ste:initialize}
    \FOR{$t = 0, 1, 2, \dots$} \label{ste:iterate}
    \STATE {Generate the offspring population $Q_t$ with size $N$}\label{ste:generate}
    \STATE {Use Algorithm~\ref{alg:nds} to divide $R_t=P_t\cup Q_t$ into $F_1,F_2,\dots$}
    \label{ste:sort}
    \STATE {Find $i^* \ge 1$ such that $\sum_{i=1}^{i^*-1}|F_i| < N$ and $\sum_{i=1}^{i^*}|F_i| \ge N$}\label{ste:rank}
    \STATE {Use Algorithm~\ref{alg:cDis} to separately calculate the crowding distance of each individual in $F_{1},\dots,F_{i^*}$}\label{ste:cDis}
    \STATE {Let $\tilde{F}_{i^{*}}$ be the $N-\sum_{i=0}^{i^*-1}|F_{i}|$ individuals in $F_{i^*}$ with largest crowding distance, chosen at random in case of a tie}\label{ste:final front}
    \STATE {$P_{t+1}=\left(\bigcup_{i=1}^{i^*-1}F_i\right)\cup\tilde{F}_{i^*}$}\label{ste:new parents}
    \ENDFOR 
    \end{algorithmic}
    \label{alg:NSGA-II}
\end{algorithm}

\section{Runtime  of the \mbox{NSGA-II} on \omm}\label{sec:omm}

In this section, we analyze the runtime of the \mbox{NSGA-II} on the \omm benchmark proposed first by Giel and Lehre~\cite{GielL10} as a bi-objective analogue of the classic \onemax benchmark. It is the function $f:\{0, 1\}^n \to \N \times \N$ defined by
\[
f(x) = \big(f_1(x), f_2(x)\big) = \big( n - \sum_{i=1}^{n} x_i, \sum_{i=1}^{n} x_i \big)
\]
for all $x = (x_1, \dots, x_n) \in \{0,1\}^n$. The aim is to maximize both objectives in $f$. We immediately note that for this benchmark problem, any solution lies on the Pareto front. It is hence a good example to study how an MOEA explores the Pareto front when already some Pareto optima were found. 

Giel and Lehre~\cite{GielL10} showed that the simple SEMO algorithm finds the full Pareto front of \omm in $O(n^2 \log n)$ iterations and fitness evaluations. Their proof can easily be extended to the GSEMO algorithm. For the SEMO, a (matching) lower bound of $\Omega(n^2 \log n)$ was shown in~\cite{OsunaGNS20}. An upper bound of $O(\mu n \log n)$ was shown for the hypervolume-based \sibea with $\mu \ge n+1$~\cite{NguyenSN15}. When the SEMO or GSEMO is enriched with a diversity mechanism (strong enough so that solutions that can create a new point on the Pareto front are chosen with constant probability), then the runtime of these algorithms reduces to $O(n \log n)$~\cite{OsunaGNS20}. 

In contrast to the SEMO and GSEMO as well as the \sibea with population size $\mu \ge n+1$, the \mbox{NSGA-II} can lose all solutions covering a point of the Pareto front. In the following lemma, central to our runtime analyses on \omm, we show that this cannot happen when the population size is large enough, namely at least four times the size of the Pareto front. Besides, we shall use $[i..j], i\le j$, to denote the set $\{i,i+1,\dots,j\}$ in this paper.

\begin{lemma} 
Consider one iteration of the \mbox{NSGA-II} with population size $N \ge 
4(n+1)$ optimizing the \omm function. Assume that in some iteration~$t$ the combined parent and offspring population $R_t = P_t \cup Q_t$ 
contains a solution $x$ with objective value $(k,n-k)$ for some $k \in 
[0..n]$. Then also the next parent population $P_{t+1}$ contains an 
individual $y$ with $f(y) = (k, n-k)$.
\label{lem:keep}
\end{lemma}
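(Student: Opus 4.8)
The plan is to exploit the very special structure of \omm, namely that $f_1(x) + f_2(x) = n$ for every $x$, which forces a lot of degeneracy into both the non-dominated sorting and the crowding distance. First I would observe that on \omm no solution strictly dominates another: if $f_1(x) \ge f_1(y)$ and $f_2(x) \ge f_2(y)$ with one inequality strict, then $f_1(x) + f_2(x) > f_1(y) + f_2(y)$, contradicting $f_1 + f_2 \equiv n$. Hence in the non-dominated sorting of $R_t$ every individual receives rank one, so $F_1 = R_t$ has $2N \ge N$ elements and the index $i^*$ in Algorithm~\ref{alg:NSGA-II} equals $1$. Consequently the construction of $P_{t+1}$ is governed entirely by the crowding distance: $P_{t+1}$ consists of the $N$ individuals of $R_t$ with largest crowding distance (ties broken randomly). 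It therefore suffices to show that, for the value $(k, n-k)$, at least one individual of $R_t$ attaining it has a crowding distance large enough to be kept.

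Next I would analyze the crowding distance via the sorted order used in Algorithm~\ref{alg:cDis}. When $R_t$ is sorted by ascending $f_1$, individuals sharing an objective value form a contiguous block, and since there are at most $n+1$ distinct values there are at most $n+1$ blocks. I would check that the $f_1$-contribution to the crowding distance of an individual is strictly positive only if it is a block boundary---the leftmost or rightmost individual of its block---because any strictly interior individual has both neighbours of equal $f_1$ value and thus contributes $0$, while the two global extremes receive value $+\infty$. Since sorting by $f_2$ is the same as sorting by $f_1$ in reverse order, the analogous description holds for the $f_2$-contribution. This yields the key structural claim: an individual has positive total crowding distance only if it is a block boundary in the $f_1$-sorted order or a block boundary in the $f_2$-sorted order.

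The subtlety I expect to be the crux---and the reason the hypothesis is $N \ge 4(n+1)$ rather than $2(n+1)$---is the tie-breaking inside a block. Since the excerpt makes no assumption on how equal $f_i$-values are ordered, the $f_1$-sort and the $f_2$-sort may place different physical individuals at the block boundaries, so the set of individuals with positive crowding distance is the union of two boundary sets, each of size at most $2(n+1)$, giving at most $4(n+1)$ such individuals in total. As $N \ge 4(n+1)$, all individuals with positive crowding distance rank above the (remaining, crowding-distance-$0$) individuals and are therefore among the $N$ that survive into $P_{t+1}$.

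To finish, I would note that every block---in particular the block of individuals with value $(k, n-k)$---contains at least one boundary individual in the $f_1$-sorted order: either it is the global leftmost or rightmost individual (crowding distance $+\infty$) when $k$ is the extreme value present, or its leftmost element has a left neighbour of strictly smaller $f_1$ value and hence positive crowding distance. In either case this boundary individual is kept, and it is the desired $y$ with $f(y)=(k,n-k)$. The only remaining case is that $R_t$ contains a single objective value, so that the crowding-distance denominators vanish; but then that value is $(k,n-k)$ and any of the $N$ retained individuals serves as $y$.
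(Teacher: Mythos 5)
Your proposal is correct and follows essentially the same route as the paper's proof: all individuals receive rank one because $f_1+f_2\equiv n$, positive crowding distance is confined to the (at most four) block-boundary individuals per objective value in the two sorted orders, hence at most $4(n+1)\le N$ individuals have positive crowding distance and all survive, including a boundary individual of the block with value $(k,n-k)$. Your explicit treatment of the degenerate case where $R_t$ carries a single objective value is a small extra care the paper leaves implicit, but it does not change the argument.
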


\begin{proof} 

It is not difficult to see that for any $x, y \in \{0, 1\}^n$, we have $x \nprec y$ and $y \nprec x$. Hence, all individuals in $R_t$ have rank one in the non-dominated sorting of~$R_t$, that is, after Step~4 in Algorithm~\ref{alg:NSGA-II}. Thus, in the notation of the algorithm, $F_1 = R_t$ and $i^* = 1$.

We calculate the crowding distance of each individual of $R_t$.  
Let $k \in [0 .. n].$ Assume that there is at least one individual $x \in R_t$ such that $f(x) = (k, n-k)$. 
We recall from Algorithm~\ref{alg:cDis} that $S_{1.1}, \dots , S_{1.{2N}}$ and $S_{2.1}, \dots, S_{2.{2N}}$ are the sorted populations based on $f_1$ and $f_2$ respectively. Since the individuals with the same objective value will continuously appear in the sorted list w.r.t. this objective value, we know that there exist $a \leq b$ and $a' \leq b'$ such that $[a .. b] = \{i \mid f_1(S_{1.i}) = k \}$ and $[a' .. b'] = \{i \mid f_2(S_{2.i}) = n-k \}$. From the crowding distance calculation in Algorithm~\ref{alg:cDis}, we know that $\cDis(S_{1.a}) \ge \frac{f_1\left(S_{1.{a+1}}\right)-f_1\left(S_{1.{a-1}}\right)}{f_1\left(S_{1.{|S|}}\right)-f_1\left(S_{1.1}\right)} \ge \frac{f_1\left(S_{1.{a}}\right)-f_1\left(S_{1.{a-1}}\right)}{f_1\left(S_{1.{|S|}}\right)-f_1\left(S_{1.1}\right)} > 0$ since $f_1\left(S_{1.{a}}\right)-f_1\left(S_{a-1}\right) > 0$ by the definition of $a$. Similarly, we have  $\cDis(S_{1.b}) > 0$, $\cDis(S_{2.a'}) > 0$ and $\cDis(S_{2.b'}) > 0$. For all $j \in [ a+1.. b -1]$ with $S_{1.j} \notin \{ S_{2.a'}, S_{2.b'}\}$, we know $f_1(S_{1.{j-1}}) = f_1(S_{1.{j+1}}) = k$ and $f_2(S_{1.{j-1}}) = f_2(S_{1.{j+1}}) = n-k$ from the definitions of $a, b, a'$ and $ b'$. Hence, we have $\cDis(S_{1.j}) = \frac{f_1(S_{1.{j+1}}) - f_1(S_{1.{j-1})}}{f_1\left(S_{1.{|S|}}\right)-f_1\left(S_{1.1}\right)} + \frac{f_2(S_{2.{j'+1}}) - f_2(S_{2.{j'-1}})}{f_2\left(S_{2.{|S|}}\right)-f_2\left(S_{2.1}\right)} =  0$. 

This shows that the individuals with objective value $(k, n-k)$ and positive crowding distance are exactly $S_{1.a}$, $S_{1.b}$, $S_{2.a'}$ and $S_{2.b'}$. Hence, for each $(k, n-k)$, there are at most four solutions $x$ with $f(x) = (k, n-k)$ and $\cDis(x) > 0$. Noting that the Pareto front size for \omm is $n+1$, the number of individuals with positive crowding distance is at most $4(n+1)\leq N$. Since Step~\ref{ste:final front} in Algorithm~\ref{alg:NSGA-II} keeps $N$ individuals with largest crowding distance, we know that all individuals with positive crowding distance will be kept. Thus, $y=S_{1.a} \in P_{t+1}$, proving our claim.
\end{proof}

Since Lemma~\ref{lem:keep} ensures that objective values on the Pareto front will not be lost in the future, we can estimate the runtime of the \mbox{NSGA-II} via the sum of the waiting times for finding a new Pareto solution. Apart from the fact that the \mbox{NSGA-II} generates $N$ solutions per iteration (which requires some non-trivial arguments in the case of binary tournament selection), this analysis resembles the known analysis of the simpler SEMO algorithm~\cite{GielL10}. For $N = O(n)$, we also obtain the same runtime estimate (in terms of fitness evaluations). 

We start with the easier case that parents are chosen uniformly at random or that each parent creates one offspring.

\begin{theorem}\label{thm:ommeasy}
Consider optimizing the \omm function via the \mbox{NSGA-II} with one of the following four ways to generate the offspring population in Step~\ref{ste:generate} in Algorithm~\ref{alg:NSGA-II}, namely applying one-bit mutation or standard bit-wise mutation once to each parent or $N$ times choosing a parent uniformly at random and applying one-bit mutation or standard bit-wise mutation to it. If the population size $N$ is at least $4(n+1)$, then the expected runtime is at most $\frac{2e^2}{e-1}n(\ln n + 1)$ iterations and at most $\frac{2e^2}{e-1}Nn(\ln n + 1)$ fitness evaluations. Besides, let $T$ be the number of iterations to reach the full Pareto front, then $\Pr[T\ge \tfrac{2e^2(1+\delta)}{e-1} n\ln n] \le 2n^{-\delta}$ holds for any $\delta \ge 0$.
\end{theorem}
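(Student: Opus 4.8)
The plan is to track a single potential, namely the number of objective values of the Pareto front not yet represented in the current parent population, and to show that it satisfies a multiplicative drift condition; both the expected runtime and the tail bound then fall out of the multiplicative drift theorem and its tail version. Concretely, for the population $P_t$ let $V_t$ be the number of distinct values $\sum_i x_i$ realized in $P_t$, and set $M_t = (n+1) - V_t$. Since the Pareto front of \omm is the set of $n+1$ objective values $(k,n-k)$, $k \in [0..n]$, we have $T = \inf\{t : M_t = 0\}$ in iterations. By Lemma~\ref{lem:keep}, any objective value present in $R_t = P_t \cup Q_t$ survives into $P_{t+1}$, so $M_t$ is non-increasing, $M_0 \le n$, and it suffices to bound the first time $M_t$ hits $0$.

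The central one-step estimate I would establish is that for every state with $M_t = m \ge 1$,
\[
\Pr[M_{t+1} < M_t \mid M_t = m] \ \ge\ \frac{(e-1)\,m}{2e^2 n},
\]
uniformly over all four offspring-generation rules. The combinatorial heart is the claim that there is always one individual from which at least $\lceil m/2 \rceil$ distinct single-bit flips each create a currently missing neighboring value. I would prove this by a short case distinction on the set $C \subseteq [0..n]$ of represented values. If $C = [\ell, r]$ is an interval, extending it downward from $\ell$ offers $\ell$ good flips and upward from $r$ offers $n-r$ good flips; since $m = \ell + (n-r)$, one of the two is at least $m/2$. If $C$ has an internal gap $[u_1,u_2]$, then the covered values bordering it from below and above offer $n - u_1 + 1$ and $u_2 + 1$ good flips, whose maximum is at least $(n+1)/2 \ge m/2$. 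Given such an individual with $g \ge m/2$ good flips, a new value appears in one iteration whenever some offspring is generated from it and flips exactly one of these $g$ bits: for the two ``each parent once'' rules the individual is mutated with probability $1$, while under uniform parent selection it is chosen at least once with probability $1 - (1-1/N)^N \ge (e-1)/e$; the probability of then flipping exactly one prescribed bit is $1/n$ for one-bit mutation and at least $\tfrac1n(1-1/n)^{n-1} \ge 1/(en)$ for standard bit-wise mutation. Multiplying the worst of these factors yields the displayed bound. I expect this step — in particular the internal-gap case and making the estimate uniform across all four variants — to be the main obstacle.

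The displayed inequality is exactly a multiplicative drift condition $\mathbb{E}[M_{t+1} \mid M_t] \le (1-\delta_0) M_t$ with rate $\delta_0 = \frac{e-1}{2e^2 n}$, on the process $M_t \in \{0,1,\dots,n\}$ with $M_0 \le n$ and smallest positive value $1$. The multiplicative drift theorem then gives
\[
\mathbb{E}[T] \ \le\ \frac{1 + \ln M_0}{\delta_0} \ \le\ \frac{2e^2}{e-1}\, n(\ln n + 1)
\]
iterations, and since each iteration performs $N$ fitness evaluations this is $\frac{2e^2}{e-1} N n (\ln n + 1)$ evaluations. For the tail I would invoke the tail version of the multiplicative drift theorem, $\Pr[T > (\ln n + \lambda)/\delta_0] \le e^{-\lambda}$ (using $M_0 \le n$), and choose $\lambda = \delta \ln n$. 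This produces the threshold $\frac{(1+\delta)\ln n}{\delta_0} = \frac{2e^2(1+\delta)}{e-1} n \ln n$ with failure probability $e^{-\delta \ln n} = n^{-\delta} \le 2 n^{-\delta}$, matching the stated bound (the factor $2$ being harmless slack).
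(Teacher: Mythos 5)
Your proposal is correct, and it reaches the stated bounds by a genuinely different route than the paper. The paper fixes one covered value $k_0$ in the initial population and bounds the runtime by the sum of expected waiting times to extend the covered range one value at a time in each direction, $\sum_{i=k_0}^{n-1}\frac{1}{pp_i^+}+\sum_{i=1}^{k_0}\frac{1}{pp_i^-}$ with $p_i^+\ge\frac{n-i}{en}$ and $p_i^-\ge\frac{i}{en}$; the tail bound is then obtained via stochastic domination by a sum of independent geometric random variables with harmonically decaying success probabilities and a dedicated Chernoff bound for such sums. You instead run multiplicative drift on the number $M_t$ of uncovered front values, and your key combinatorial lemma — that some covered individual always admits at least $m/2$ single-bit flips each producing a missing value, proved by the interval-versus-internal-gap case split — is sound (note that all $m/2$ good flips from one individual target at most two distinct missing values, but since the corresponding mutation events are disjoint this still yields success probability at least $\frac{g}{en}\ge\frac{m}{2en}$ per mutation of that individual, which is all the drift argument needs). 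Both proofs lean on Lemma~\ref{lem:keep} in the same essential way to make the potential monotone. What your route buys is uniformity: a single one-step estimate feeds both the expectation and the tail through the drift theorem and its tail version, you avoid the geometric-sum Chernoff machinery, and you even get the slightly sharper tail $n^{-\delta}$ in place of $2n^{-\delta}$. What the paper's route buys is a more explicit, elementary picture (it directly mirrors the classical SEMO/coupon-collector analysis and tracks which front values are covered when) and closer alignment with the structure reused in the \lotz proofs.
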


\begin{proof}
Let $x \in P_t$ and $f(x) = (k, n-k)$ for some $k \in [0 .. n]$. Let $p$ denote the probability that $x$ is chosen as parent to be mutated. Conditional on that, let $p_k^+$ denote the probability of generating from $x$ an offspring $y_+$ with $f(y_+) = (k+1, n-k-1)$ (when $k < n$) and $p_k^-$ denote the probability of generating from $x$ an offspring $y_-$ with $f(y_-) = (k-1, n-k+1)$ (when $k > 0$). Consequently, the probability that $R_t$ contains an individual $y_+$ with objective value $(k+1, n-k-1)$ is at least $p p_k^+$, and the probability that $R_t$ contains an individual $y_-$ with objective value $(k-1, n-k+1)$ is  at least $p p_k^-$. Since Lemma~\ref{lem:keep} implies that any existing \omm objective value will be kept in the iterations afterwards, we know that the expected number of iterations to obtain $y_+$ (resp.\ $y_-$) once $x$ is in the population is at most $\frac{1}{p p_k^+}$ (resp.\ $\frac{1}{p p_k^-}$). 

Assume that the initial population of the Algorithm~\ref{alg:NSGA-II} contains an $x$ with $f(x) = (k_0, n-k_0)$. Then the expected number of iterations to obtain individuals containing objective values $(k_0, n-k_0), (k_0+1, n-k_0-1), \dots, (n, 0)$ is at most $\sum_{i = k_0}^{n-1} \frac{1}{p p_i ^+}$. Similarly,  the expected number of iterations to obtain individuals containing objective values $(k_0-1, n-k_0+1), (k_0-2, n-k_0-2), \dots , (0, n)$ is at most $\sum_{i = 1}^{k_0} \frac{1}{p p_i^-}$. Consequently, the expected number of iterations to cover the whole Pareto front is at most $\sum_{i = k_0}^{n-1} \frac{1}{p p_i^+} + \sum_{i = 1}^{k_0} \frac{1}{p p_i^-}$.

Now we calculate $p$ for the different ways of selecting parents and $p_k^+$ and $p_k^-$ for the different mutation operations. 
If we apply mutation once to each parent in~$P_t$, we apparently have $p = 1$. If we choose the parents independently at random from~$P_t$, then $p = 1- (1-\frac{1}{N})^N \ge 1 - \frac{1}{e}$. For one-bit mutation, we have $p_k^+ = \frac{n-k}{n} $ and $p_k^- = \frac{k}{n}$. For standard bit-wise mutation, we have $p_k^+ \ge \frac{n-k}{n}(1-\frac{1}{n})^{n-1} \ge \frac{n-k}{en}$ and $p_k^- \ge \frac{k}{n}(1-\frac{1}{n})^{n-1} \ge \frac{k}{en}$ .

From these estimates and the fact that the Harmonic number $H_n = \sum_{i=1}^n \frac 1i$ satisfies $H_n < \ln n +1$, it is not difficult to see that all cases lead to an expected runtime of at most
\begin{align*}
\sum_{i = 0}^{n-1} &{}\frac{1}{p p_i^+} + \sum_{i = 1}^{n} \frac{1}{p p_i^-} \le \sum_{i = 0}^{n-1}\frac{1}{(1 - \frac{1}{e})\frac{n-i}{en}}+\sum_{i = 1}^{n}\frac{1}{(1 - \frac{1}{e})\frac{i}{en}} \\
&={} 2\sum_{i = 1}^{n}\frac{1}{(1 - \frac{1}{e})\frac{i}{en}} < \frac{2e^2}{e-1}n(\ln n + 1)
\end{align*}
iterations, hence at most $\frac{2e^2}{e-1}Nn(\ln n + 1)$ fitness evaluations.

Now we will prove the concentration result. Let $X^+_k$ and $X^-_k$ be independent geometric random variables with success probabilities of $(1 - \frac{1}{e})\frac{n-k}{en}$ and $(1 - \frac{1}{e})\frac{k}{en}$, respectively. Let $T$ be the number of iterations to cover the full Pareto front, and let $Z^+=\sum_{k=0}^{n-1}X^+_k$ and $Z^-=\sum_{k=1}^{n}X^-_k$. Then from the above discussion, we know that $Z:=Z^++Z^-$ stochastically dominates $T$ (see~\cite{Doerr19tcs} for a detailed discussion of how to use stochastic domination arguments in the analysis of evolutionary algorithms).  Let the success probabilities of $X^+_{n-1},X^+_{n-2},\dots,X^+_{0}$ be $q_1^+,\dots,q_{n}^+$, and let $q_1^-,\dots,q_{n}^-$ denote the success probabilities of $X^-_1,X^-_2,\dots,X^-_{n}$. Then we have $q_i^+\ge (1 - \frac{1}{e})\frac1e\frac{i}{n}$ and $q_i^-\ge (1 - \frac{1}{e})\frac1e\frac{i}{n}$ for all $i\in[1..n]$. From a Chernoff bound for a sum of such geometric random variables (\cite[Lemma~4]{DoerrD18}, also found in~\cite[Theorem~1.10.35]{Doerr20bookchapter}), we have that for any $\delta \ge 0$,
\begin{equation*}
\Pr\left[Z^+\ge (1+\delta)\frac{e^2}{e-1} n\ln n\right] \le n^{-\delta}
\end{equation*}
and
\begin{equation*}
\Pr\left[Z^-\ge (1+\delta)\frac{e^2}{e-1} n\ln n\right] \le n^{-\delta}.
\end{equation*}
Hence, we have
\begin{equation*}
\Pr\left[Z\ge (1+\delta)\frac{2e^2}{e-1} n\ln n\right] \le 2n^{-\delta}.
\end{equation*}
Since $Z$ stochastically dominates~$T$, we obtain $\Pr[T\ge \tfrac{2e^2(1+\delta)}{e-1} n\ln n] \le 2n^{-\delta}.$
\end{proof}

We now analyze the performance of the \mbox{NSGA-II} on \omm when selecting the parents via binary tournaments, which is the selection method suggested in the original \mbox{NSGA-II} paper~\cite{DebPAM02}. We regard two variants of this selection method. The most natural one, discussed for example in~\cite{GoldbergD90}, is to conduct $N$ \emph{independent tournaments}. Here the offspring population $Q_t$ is generated by $N$ times independently performing the following sequence of actions: (i)~Select two different individuals $x',x''$ uniformly at random from~$P_t$. (ii)~Select $x$ as the better of these two, that is, the one with smaller rank in $P_t$ or, in case of equality, the one with larger crowding distance in $P_t$ (breaking ties randomly). (iii)~Generate an offspring by mutating~$x$. We note that in some definitions of tournament selection the better individual in a tournament is chosen as winner only with some probability $p > 0.5$, but we do not regard this case any further. We note though that all our mathematical results would remain true in this setting. We also note that sometimes the participants of a tournament are selected ``with replacement''. Again, this would not change our results, but we do not discuss this case any further.

A closer look in Deb's implementation of the \mbox{NSGA-II} (see Revision~1.1.6 available at~\cite{DebNSGAII}), and we are thankful for Maxim Buzdalov (Aberystwyth University) for pointing this out to us, shows that here a different way of selecting the parents is used. In this \emph{two-permutation tournament selection scheme}, two random permutations $\pi_1$ and $\pi_2$ of $P_t$ are generated and then a binary tournament is conducted between $\pi_j(2i-1)$ and $\pi_j(2i)$ for all $i \in [1..\frac N2]$ and $j \in \{1,2\}$ (we assume here that $N$ is even). Of course, this is nothing else than saying that twice a random matching on $P_t$ is generated and the end vertices of each matching edge conduct a tournament. Different from independent tournaments, this selection operator cannot be implemented in parallel. On the positive side, it ensures that each individual takes part in exactly two tournaments, so it treats the individuals in a fairer manner. Also, if there is a unique best individual, then this will surely be selected. As above, in our setting where we do not use crossover, each tournament winner is mutated and these $N$ individuals form the offspring population~$Q_t$.

In the case of binary tournament selection, the analysis is slightly more involved since we need to argue that a desired parent is chosen for mutation with constant probability in one iteration. This is easy to see for a parent at the boundary of the front as its crowding distance is infinite, but less obvious for parents not at the boundary. We note that we need to be able to select such parents since we cannot ensure that the population intersects the Pareto front in a contiguous interval (as can be seen, e.g., from the random initial population). We solve this difficulty in the following three lemmas. 

We use the following notation. Consider some iteration $t$. For $i = 1, 2$, let 
\begin{align*}
v_i^{\min} &= \min \{f_i(x)\mid x \in R_{t}\}, \\
v_i^{\max} &= \max \{f_i(x)\mid x \in R_{t}\}
\end{align*}
denote the extremal objective values in the combined parent and offspring population. Let 
$
V = f(R_t) = \{ (f_1(x), f_2(x)) \mid x\in R_{t} \}
$
denote the set of objective values of the solutions in the combined parent and offspring population $R_t$. We define the set of values such that also the right (left) neighbor on the Pareto front is covered by
\begin{align*}
\Vinp = {} &\{ (v_1, v_2) \in V \mid \exists y \in R_{t} : (f_1(y), f_2(y)) = (v_1 +1, v_2-1)\}, \\
\Vinm = {} & \{ (v_1, v_2) \in V \mid \exists y \in R_{t} : (f_1(y), f_2(y)) = (v_1 -1, v_2+1)\}.
\end{align*} 

\begin{lemma}
For any $(v_1, v_2) \in V \setminus (\Vinp \cap \Vinm )$, there is at least one individual $x \in R_t$ with $f(x) = (v_1, v_2)$ and $\cDis(x) \ge \frac{2}{v_1^{\max} - v_1^{\min}}$. 
\label{lem:potential}
\end{lemma}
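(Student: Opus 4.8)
The plan is to exploit the two structural facts already available on \omm. First, as shown in the proof of Lemma~\ref{lem:keep}, every individual of $R_t$ has rank one, so $F_1 = R_t$ and the crowding distances are computed over the whole combined population. Second, $f_1(x) + f_2(x) = n$ for every $x$, which makes the two sorted lists of Algorithm~\ref{alg:cDis} reverses of one another and, crucially, forces the two normalizing denominators to coincide: $f_1(S_{1.{|S|}}) - f_1(S_{1.1}) = f_2(S_{2.{|S|}}) - f_2(S_{2.1}) = v_1^{\max} - v_1^{\min} =: D$. Since the $f_2$-contribution to any crowding distance is nonnegative, it suffices to exhibit one individual of value $(v_1,v_2)$ whose $f_1$-contribution alone is at least $2/D$.

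First I would rewrite the hypothesis. By De Morgan, $(v_1,v_2) \in V \setminus (\Vinp \cap \Vinm)$ means $(v_1,v_2) \in V$ together with $(v_1,v_2) \notin \Vinp$ or $(v_1,v_2) \notin \Vinm$. Because $f_2 = n - f_1$ on \omm, the condition $(v_1,v_2) \notin \Vinp$ is exactly the statement that no individual of $R_t$ attains $f_1$-value $v_1+1$, and likewise $(v_1,v_2) \notin \Vinm$ says no individual attains $f_1$-value $v_1-1$. So in at least one of the two cases a neighboring integer $f_1$-value is missing, and that missing value is what manufactures a large gap.

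Next I would borrow the notation of Lemma~\ref{lem:keep}: in the $f_1$-sorted list $S_{1.1},\dots,S_{1.{|S|}}$, let $[a..b]$ be the block of positions carrying $f_1$-value exactly $v_1$. In the case $(v_1,v_2)\notin\Vinp$ I take $x = S_{1.b}$, the rightmost such individual, noting $f(x) = (v_1,v_2)$. If $b = |S|$ (equivalently $v_1 = v_1^{\max}$) or $b = 1$, then $x$ is an endpoint of the list and receives crowding distance $+\infty \ge 2/D$. Otherwise $x$ is interior: its left neighbor $S_{1.{b-1}}$ has $f_1$-value at most $v_1$, while its right neighbor $S_{1.{b+1}}$ has $f_1$-value strictly above $v_1$, and since $v_1+1$ is absent and $f_1$ is integer-valued, that right value is at least $v_1+2$. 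Hence the $f_1$-contribution is at least $\frac{(v_1+2) - v_1}{D} = \frac{2}{D}$. The case $(v_1,v_2)\notin\Vinm$ is fully symmetric: I instead take $x = S_{1.a}$, the leftmost individual of value $v_1$, and use that its left neighbor has $f_1$-value at most $v_1-2$.

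I expect the only subtleties to be bookkeeping at the ends of the sorted list, that is, ensuring the endpoint ($+\infty$) cases are peeled off so that the interior gap formula is invoked only when it legitimately applies, and the degenerate situation $v_1^{\max} = v_1^{\min}$ where $V$ is a single point and $D = 0$. In that degenerate case the claimed bound is read as $+\infty$ and holds because the first element of the list already has value $(v_1,v_2)$ and infinite crowding distance; alternatively one simply notes $D>0$ is all that the later applications need. None of this is conceptually deep — the entire force of the lemma is the integrality-plus-missing-neighbor observation that guarantees a gap of size at least two in the $f_1$-ordering.
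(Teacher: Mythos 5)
Your proposal is correct and follows essentially the same route as the paper's proof: pick the boundary element $S_{1.a}$ or $S_{1.b}$ of the block of $f_1$-value $v_1$ in the $f_1$-sorted list, peel off the extremal case where the crowding distance is infinite, and otherwise use integrality plus the missing neighboring value $v_1\pm 1$ to force a gap of at least $2$ in the numerator. The extra observations you make (coinciding denominators, the degenerate case $v_1^{\max}=v_1^{\min}$) are harmless but not needed here; the paper defers the denominator identity to the following lemma.
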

\begin{proof}
Let $(v_1, v_2) \in V \setminus (\Vinp \cap \Vinm )$, let $S_{1.1}, \dots , S_{1.{2N}}$ be the sorting of $R_t$ according to $f_1$,  and let $[a .. b] = \{i \in [1 .. 2N] \mid f_1(S_{1.i}) = v_1 \}$. If $v_1 \in \{v_1^{\max}, v_1^{\min}\}$, then by definition of the crowding distance, one individual in $f^{-1}((v_1, v_2))$ has an infinite crowding distance. Otherwise, if
$(v_1, v_2) \in V \setminus \Vinm $, then we have $f_1\left(S_{1.{a+1}}\right)-f_1\left(S_{1.{a-1}}\right) \ge f_1\left(S_{1.{a}}\right)-f_1\left(S_{1.{a-1}}\right) \ge  2$ and thus $\cDis(S_{1.a}) \ge  \frac{f_1\left(S_{1.{a+1}}\right)-f_1\left(S_{1.{a-1}}\right)}{v_1^{\max} - v_1^{\min}} \ge \frac{2}{v_1^{\max} - v_1^{\min}}$.  Similarly, if $(v_1, v_2) \in V \setminus \Vinp$, then $f_1\left(S_{1.{b+1}}\right)-f_1\left(S_{1.{b-1}}\right) \ge f_1\left(S_{1.{b+1}}\right)-f_1\left(S_{1.{b}}\right) \ge 2$ and $  \cDis(S_{1.b}) \ge  \frac{f_1\left(S_{1.{b+1}}\right)-f_1\left(S_{1.{b-1}}\right)}{v_1^{\max} - v_1^{\min}} \ge \frac{2}{v_1^{\max} - v_1^{\min}}$.
\end{proof}

\begin{lemma}
For any $(v_1, v_2) \in \Vinp \cap \Vinm $, there are at most two individuals in $R_t$ with objective value $(v_1, v_2)$ and crowding distance at least $\frac{2}{v_1^{\max} - v_1^{\min}}$.
\label{lem:others}
\end{lemma}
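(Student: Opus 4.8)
The plan is to reuse the structure from the proof of Lemma~\ref{lem:keep}. Fix $(v_1,v_2) \in \Vinp \cap \Vinm$, let $S_{1.1},\dots,S_{1.{2N}}$ and $S_{2.1},\dots,S_{2.{2N}}$ be the sortings of $R_t$ by $f_1$ and by $f_2$, and let $[a..b] = \{i \mid f_1(S_{1.i}) = v_1\}$ and $[a'..b'] = \{i \mid f_2(S_{2.i}) = v_2\}$ be the two contiguous blocks occupied by the $m$ individuals sharing the objective value $(v_1,v_2)$. Exactly as in Lemma~\ref{lem:keep}, only the four boundary individuals $S_{1.a}, S_{1.b}, S_{2.{a'}}, S_{2.{b'}}$ can have positive crowding distance; every other individual in the block has, in each sorting, two neighbors of identical objective value and hence contributes $0$. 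The first thing to record is that membership in $\Vinp \cap \Vinm$ forces $(v_1,v_2)$ to be strictly interior: since a right neighbor $(v_1+1,v_2-1)$ and a left neighbor $(v_1-1,v_2+1)$ both occur in $R_t$, we have $v_1^{\min} < v_1 < v_1^{\max}$, so none of these individuals receives an infinite contribution.

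Next I would compute the contributions explicitly, exploiting that for \omm one has $f_1+f_2 = n$ on all search points, so the two normalizing denominators coincide: $v_2^{\max}-v_2^{\min} = v_1^{\max}-v_1^{\min} =: D$. Because the integer values $v_1-1$ and $v_1+1$ are present (by $\Vinm$ and $\Vinp$, respectively), the individual $S_{1.{a-1}}$ flanking the block on the left in the $f_1$-sorting has value exactly $v_1-1$ and the individual $S_{1.{b+1}}$ flanking it on the right has value exactly $v_1+1$; symmetrically in the $f_2$-sorting the flanking values are $v_2-1$ and $v_2+1$. For a non-degenerate block ($m \ge 2$) this yields, from the $f_1$-sorting, $\cDis(S_{1.a})$-contribution $\frac{v_1-(v_1-1)}{D}=\frac{1}{D}$ and likewise $\frac{1}{D}$ for $S_{1.b}$, and from the $f_2$-sorting $\frac{1}{D}$ for each of $S_{2.{a'}}, S_{2.{b'}}$; every individual's contribution from the \emph{other} sorting is $0$ unless it is also a boundary there.

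The decisive step is then a counting argument. The total crowding distance of an individual with objective value $(v_1,v_2)$ is the sum of its $f_1$- and $f_2$-contributions, and in the non-degenerate case each of these is at most $\frac1D$. Hence $\cDis(x) \ge \frac{2}{D}$ can hold only if $x$ is a boundary individual in \emph{both} sortings, i.e.\ $x \in \{S_{1.a}, S_{1.b}\} \cap \{S_{2.{a'}}, S_{2.{b'}}\}$; an intersection of two two-element sets has at most two elements, which gives the claim. The only case needing separate attention is $m=1$: there the single individual is simultaneously $S_{1.a}=S_{1.b}=S_{2.{a'}}=S_{2.{b'}}$, its flanking values are $v_1\pm1$ and $v_2\pm1$, so it accumulates $\frac{2}{D}+\frac{2}{D}=\frac{4}{D}$, but there is only one such individual, so ``at most two'' holds trivially.

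The main obstacle I anticipate is purely the bookkeeping: the same set of individuals appears as two different blocks in the two sortings, and one has to keep the per-objective contributions separate and verify that the ``$\pm1$'' neighbor values are exactly realized (this is precisely where $\Vinp \cap \Vinm$ enters) before summing. Once each per-objective contribution is pinned to at most $\frac1D$, the final bound follows immediately from the cardinality of a two-set intersection.
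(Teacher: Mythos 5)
Your proof is correct and follows essentially the same route as the paper's: identify the four boundary individuals $S_{1.a}, S_{1.b}, S_{2.a'}, S_{2.b'}$ as the only candidates with positive crowding distance, use $\Vinp\cap\Vinm$ to pin each per-objective boundary contribution to exactly $\frac 1D$ (with the two normalizing denominators equal for \omm), and conclude that only individuals in $\{S_{1.a},S_{1.b}\}\cap\{S_{2.a'},S_{2.b'}\}$ can reach $\frac 2D$, a set of size at most two. The separate treatment of the singleton block matches the paper's $|C|=1$ case, so no gap remains.
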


\begin{proof}
Let $(v_1, v_2) \in \Vinp \cap \Vinm $, $[a .. b] = \{i \in [1 .. 2N] \mid f_1(S_{1.i}) = v_1 \}$, and $[a' .. b'] = \{j \in [1 .. 2N] \mid f_2(S_{2.j}) = v_2 \}$. Let $C = \{S_{1.a}, S_{1.b}\} \cup \{S_{2.a'}, S_{2.b'}\}$. If $(R_t \cap f^{-1}((v_1, v_2))) \setminus C$ is not empty, then for any $x \in (R_t \cap f^{-1}((v_1, v_2))) \setminus C $, there exist $i \in [a+1 .. b-1]$ and $j \in [a'+1 .. b'-1]$ such that $x = S_{1.i} = S_{2.j}$. Hence $\cDis(x) = \frac{f_1\left(S_{1.{i+1}}\right)-f_1\left(S_{1.{i-1}}\right)}{v_1^{\max} - v_1^{\min}} + \frac{f_2\left(S_{2.{j+1}}\right)-f_2\left(S_{2.{j-1}}\right)}{v_2^{\max} - v_2^{\min}} = 0$. We thus  know that any individual with crowding distance at least $\frac{2}{v_1^{\max} - v_1^{\min}}$ lies in $C$.

For any $x \in C \setminus (\{S_{1.a}, S_{1.b}\} \cap \{S_{2.a'}, S_{2.b'}\})$, we have $\cDis(x) = \frac{1}{v_1^{\max} - v_1^{\min}}$ or $\cDis(x) = \frac{1}{v_2^{\max} - v_2^{\min}} $. We note that $v_1^{\max} - v_1^{\min} = v_2^{\max} - v_2^{\min}$ since $v_1^{\max} = n- v_2^{\min}$ and $v_1^{\min} = n- v_2^{\max}$. Hence $\cDis(x) < \frac{2}{v_1^{\max} - v_1^{\min}}$. Let now $x \in \{S_{1.a}, S_{1.b}\} \cap \{S_{2.a'}, S_{2.b'}\}$. If $|C| = 1$, then $\cDis(x) = \frac{2}{v_1^{\max} - v_1^{\min}} + \frac{2}{v_2^{\max} - v_2^{\min}} = \frac{4}{v_1^{\max} - v_1^{\min}}$. Otherwise, $\cDis(x) = \frac{1}{v_1^{\max} - v_1^{\min}} + \frac{1}{v_2^{\max} - v_2^{\min}} = \frac{2}{v_1^{\max} - v_1^{\min}}$. Therefore, the number of individuals in $R_t \cap f^{-1}((v_1, v_2))$ with crowding distance at least $\frac{2}{v_1^{\max} - v_1^{\min}}$ is $|\{S_{1.a}, S_{1.b}\} \cap \{S_{2.a'}, S_{2.b'}\}|$, which is at most~$2$.
\end{proof}

\begin{lemma}\label{lem:winner}
Let $N \ge 4(n+1)$. Let $P$ be a parent population in a run of the \mbox{NSGA-II} using independent or two-permutation binary tournament selection optimizing \omm. Let $v = (v_1, n-v_1) \notin f(P)$ be a point on the Pareto front that is not covered by $P$, but a neighbor of $v$ on the front is covered by $P$, that is, there is a $y \in P$ such that $\|f(y) - v\|_\infty = 1$.

In the case of independent tournaments, each of the $N$ tournaments with probability at least $\frac 1N (\frac 16 - \frac{3.5}{N-1})$ selects an individual $x$ with $f(x) = f(y)$.

In the case of two-permutation selection, there are two stochastically independent tournaments each of which with probability at least $\frac 16 - \frac{2.5}{N-1}$ selects an individual $x$ with $f(x) = f(y)$.
\end{lemma}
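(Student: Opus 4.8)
The plan is to exhibit one particular individual $x^* \in P$ with $f(x^*) = f(y)$ that has a comparatively large crowding distance, to show that only a small fraction of $P$ can beat it, and then to translate this into the two probability bounds; the two selection schemes differ only in how $x^*$'s opponent is drawn.

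First I would produce $x^*$. Write $v_1^{\max}, v_1^{\min}, v_2^{\max}, v_2^{\min}$ now for the extremal objective values taken over $P$ rather than $R_t$, and note that the structural arguments of Lemmas~\ref{lem:potential} and~\ref{lem:others} concern a single \omm population and hence apply verbatim to $P$ (with $2N$ replaced by $N$). Since $v$ is a neighbour of $f(y)$ on the front and $v \notin f(P)$, the value $f(y)$ has an uncovered neighbour in $P$, i.e.\ $f(y) \in f(P) \setminus (\Vinp \cap \Vinm)$. By the $P$-analogue of Lemma~\ref{lem:potential} there is then some $x^* \in P$ with $f(x^*) = f(y)$ and $\cDis(x^*) \ge \frac{2}{v_1^{\max} - v_1^{\min}} =: c$. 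As all of $P$ has rank one on \omm, a tournament winner is simply the participant of larger crowding distance (ties broken fairly), so $x^*$ beats every opponent of crowding distance strictly below $c$, while a tie or an opponent that also has value $f(y)$ is harmless.

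The crux is to bound the number of individuals that can compete with $x^*$, i.e.\ those with crowding distance at least $c$. Here I would use that for \omm the two sortings are mirror images and $v_1^{\max} - v_1^{\min} = v_2^{\max} - v_2^{\min}$, so the finite crowding-distance contributions telescope to a constant: summing the $f_1$-contributions over the sorted population gives at most $2$, likewise for $f_2$, hence $\sum_{z : \cDis(z) < \infty} \cDis(z) \le 4$. By Markov's inequality at most $4/c = 2(v_1^{\max} - v_1^{\min}) \le 2n$ individuals have finite crowding distance at least $c$, and only the two extremal individuals have infinite crowding distance; thus at most $2n+2 \le \frac N2$ individuals reach crowding distance $c$. (The per-value counting of Lemma~\ref{lem:others}, charging at most two high-distance individuals to each fully surrounded value, gives the same conclusion.) Consequently at least $N - (2n+2) \ge \frac N2$ individuals have crowding distance strictly below $c \le \cDis(x^*)$ and are beaten by $x^*$; this is exactly where $N \ge 4(n+1)$ enters.

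It then remains to convert this into the two bounds. For independent tournaments I would fix one of the $N$ tournaments and bound from below the probability that $x^*$ is drawn into it and beats its (uniformly random, distinct) opponent, which is at least $\frac1N\cdot\frac{\#\{\text{beaten}\}}{N-1}$; discarding from the ``beaten'' count the few individuals that must be excluded (the other value-$f(y)$ individuals, the two extremes, and the without-replacement correction) yields the claimed $\frac1N(\frac16 - \frac{3.5}{N-1})$. For two-permutation selection I would observe that $x^*$ lies in exactly one tournament of each permutation, that its partner there is uniform among the other $N-1$ individuals, and that the two partners are stochastically independent because $\pi_1$ and $\pi_2$ are; each of these two tournaments then selects value $f(y)$ with probability at least $\frac{\#\{\text{beaten}\}}{N-1} \ge \frac16 - \frac{2.5}{N-1}$. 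The main obstacle is the competitor count of the third paragraph: a priori as many as $4(n+1)$ individuals can have positive crowding distance, which for $N = 4(n+1)$ is the whole population, so the crude ``positive crowding distance'' bound is useless; one genuinely needs the sharper estimate that few individuals can reach crowding distance $c$, and it is the geometry of \omm (coincidence of the two sortings up to reversal, equal objective ranges) that makes this estimate available and lets the result hold already at $N \ge 4(n+1)$, with the exact constants produced by the pessimistic handling of ties and the sampling-without-replacement corrections.
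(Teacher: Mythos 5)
Your proof is correct and shares the paper's skeleton (exhibit an individual $x^*$ with $f(x^*)=f(y)$ and $\cDis(x^*)\ge \frac{2}{v_1^{\max}-v_1^{\min}}$ via the $P$-analogue of Lemma~\ref{lem:potential}, bound the number of individuals that could survive a tournament against it, then convert to the two probability bounds exactly as the paper does for the two selection schemes), but your key counting step is genuinely different. The paper counts potential competitors \emph{per objective value}: at most two for values in $\Vinp\cap\Vinm$ (Lemma~\ref{lem:others}), at most four for the others, combined with the observation that at most two out of any three consecutive front points can lie outside $\Vinp\cap\Vinm$; this yields roughly $\tfrac{10}{3}(n+1)$ competitors and hence the constant $\tfrac16$. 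You instead make a \emph{global} argument: the finite crowding-distance contributions telescope to a total of at most $4$, so by Markov at most $4/c=2(v_1^{\max}-v_1^{\min})\le 2n$ individuals have finite crowding distance at least $c$, plus $O(1)$ with infinite distance --- roughly $\tfrac N2$ competitors in all, so the tournament-winning probability is close to $\tfrac12$, which implies the stated $\tfrac16-\tfrac{2.5}{N-1}$ a fortiori. Your route is shorter, bypasses Lemma~\ref{lem:others} entirely, and delivers a strictly better constant; the paper's per-value accounting has the advantage of being the template that is reused in the \lotz analysis (Theorem~\ref{thm:lotzbinary}), where the first front need not coincide with the Pareto front and the telescoping normalization is less clean. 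Three cosmetic inaccuracies, none affecting correctness: up to \emph{four} (not two) individuals can have infinite crowding distance, namely the first and last in each of the two sortings; your parenthetical claim that the per-value counting of Lemma~\ref{lem:others} ``gives the same conclusion'' is not right, since values outside $\Vinp\cap\Vinm$ can each contribute four high-distance individuals, which is exactly why the paper only reaches $\tfrac{10}{3}(n+1)$; and the stated constants do not actually ``emerge'' from your computation --- your argument over-delivers, and you should simply note that your bound of roughly $\tfrac12-\tfrac{1.5}{N-1}$ dominates the claimed one.
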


\begin{proof}
  By Lemma~\ref{lem:potential}, there is an individual $x' \in P$ with $f(x') = f(y)$ and crowding distance at least $\frac{2}{v_1^{\max} - v_1^{\min}}$. We estimate the probability that $x'$ is the winner of a tournament. 
	
	We start with the case of independent tournaments and we regard a particular one of these. With probability $\frac 1N$, the individual $x'$ is chosen as the first participant of the tournament. We condition on this and regard the second individual $x''$ of the tournament, which is chosen uniformly at random from the remaining $N-1$ individuals. We shall argue that with good probability, it has a smaller crowding distance, and thus loses the tournament. 
	
	To this aim, we estimate the number of element $z \in P$ that have a crowding distance of $\frac{2}{v_1^{\max} - v_1^{\min}}$ or more (``large crowding distance''). We treat the individuals differently according to their objective value $w = f(z)$. If $w \in \Vinp \cap \Vinm$, then by Lemma~\ref{lem:others} at most two individuals with this objective value have a large crowding distance. For other objective values $w$, we use the general estimate from the proof of Lemma~\ref{lem:keep} that at most four individuals have this objective value and positive crowding distance. This gives an upper bound of $2 |\Vinp \cap \Vinm| + 4 (|f(P)| - |\Vinp \cap \Vinm|) = 2 |f(P)| + 2 |f(P) \setminus (\Vinp \cap \Vinm)|$ individuals with large crowding distance. 
	
	We note that out of each consecutive three elements $(w_1, n-w_1), (w_1+1,n-w_1-1), (w_1+2,n-w_1-2)$ of the Pareto front, at most two can be in $f(P) \setminus (\Vinp \cap \Vinm)$ -- if all three were in $f(P)$, then the middle one would necessarily be in $\Vinp \cap \Vinm$. Consequently, $|f(P) \setminus (\Vinp \cap \Vinm)| \le 2 \lceil \frac{n+1}{3} \rceil$. With this estimate, our upper bound on the number of individuals with large crowding distance becomes at most $2 (n+1) + 4  \lceil \frac{n+1}{3} \rceil \le \frac{10}{3} (n+1) + \frac 83$, and then excluding the first-chosen individual $x'$, we know that the upper bound estimate for the probability that $x''$ has large crowding distance becomes $\frac{1}{N-1} (\frac{10}{3} (n+1) + \frac 83 - 1) = \frac{1}{N-1} (\frac{10}{3} (n+\frac 34) + \frac{10}{3} \frac{1}{4}+ \frac 53) \le \frac 56 + \frac{1}{N-1} (\frac {10}{3} \frac 14 + \frac 53)$. Consequently, the probability that $x'$ is selected as first participant of the tournament and it wins the tournament is at least
	\[\frac 1N \left(\frac 16 - \frac{2.5}{N-1}\right).\]
		
  For the case of two-permutation tournament selection, we note that there are two independent tournaments (stemming from different permutations) in which $x'$ participates. In both, the partner $x''$ of $x'$ is distributed uniformly in $P_t \setminus \{x'\}$. Hence the above arguments can be applied and we see that with probability at least $\frac 16 - \frac{2.5}{N-1}$, the second participant loses against $x'$.
\end{proof} 

With Lemma~\ref{lem:winner}, we can now easily argue that in a given iteration~$t$, we have a constant probability of choosing at least once a parent that is a neighbor of an empty spot on the Pareto front. This allows to re-use the main arguments of the simpler analyses for the cases that the parents were choosing randomly or that each parent creates one offspring. We note that in the following result, as in any other result in this work, we did not try to optimize the leading constant.

\begin{theorem}
Let $n\ge 4$. Consider optimizing the \omm function via the \mbox{NSGA-II} which creates the offspring population by selecting parents via independent binary tournaments or via the two-permutation approach and applying one-bit or standard bit-wise mutation to these. If the population size $N$ is at least $4(n+1)$, then the expected runtime is at most $\tfrac{200e}{3}n(\ln n + 1)$ iterations and at most $\tfrac{200e}{3}Nn(\ln n + 1)$ fitness evaluations. Besides, let $T$ be the number of iterations to reach the full Pareto front. Then we further have that $\Pr[T\ge \tfrac{200e}{3}(1+\delta) n\ln n] \le 2n^{-\delta}$ holds for any $\delta \ge 0$.
\label{thm:ommbinary}
\end{theorem}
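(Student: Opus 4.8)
The plan is to follow the template of the proof of Theorem~\ref{thm:ommeasy}, replacing the constant parent-selection probability $p$ used there by a constant lower bound on the probability that, in one iteration, binary tournament selection picks for mutation a parent lying next to an uncovered Pareto point. As before, Lemma~\ref{lem:keep} guarantees that for $N \ge 4(n+1)$ no covered objective value of \omm is ever lost; hence once the initial population covers some value $(k_0, n-k_0)$, it suffices to bound the total waiting time to successively cover $(k_0+1, n-k_0-1), \dots, (n,0)$ and $(k_0-1, n-k_0+1), \dots, (0,n)$, walking outward in both directions. Since covered values persist in the \emph{parent} population, the boundary individual I want to select is always present in the current $P_t$, so Lemma~\ref{lem:winner} is applicable each iteration.

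For the waiting time to cover a point $v$ adjacent to an already-covered point $y \in P_t$, I would combine two ingredients: the probability that some tournament selects a parent with objective value $f(y)$, and the probability that mutation turns such a parent into $v$. The latter is exactly the estimate $p_k^{\pm} \ge \tfrac{i}{en}$ (standard bit-wise) or $\tfrac{i}{n}$ (one-bit) from Theorem~\ref{thm:ommeasy}; crucially it depends only on the objective value $f(y)$, not on the individual, so it factors cleanly out of the selection. The former is supplied by Lemma~\ref{lem:winner}: in the two-permutation scheme I would use one of the two guaranteed independent tournaments, which selects such a parent with probability at least $\tfrac 16 - \tfrac{2.5}{N-1}$; in the independent-tournament scheme each single tournament selects such a parent only with probability $\tfrac 1N(\tfrac 16 - \tfrac{3.5}{N-1})$, so I would aggregate over the $N$ stochastically independent tournaments, writing the probability that at least one offspring equals $v$ as $1 - (1-\tfrac{c_1}{N}p_k^{\pm})^N \ge 1 - e^{-c_1 p_k^{\pm}} \ge \tfrac{11}{12}c_1 p_k^{\pm}$ with $c_1 = \tfrac 16 - \tfrac{3.5}{N-1}$. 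In both cases this yields a per-iteration success probability at least $c\, p_k^{\pm}$ for an absolute constant $c$; the claimed leading factor $\tfrac{200e}{3}$ corresponds to $c = \tfrac{3}{100}$.

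With this constant per-iteration probability the expected-runtime bound is mechanical: step $i$ has waiting time at most $\tfrac{1}{c\,p_i^{\pm}} \le \tfrac{en}{ci}$, and summing over both outward chains gives at most $\tfrac{2en}{c}H_n \le \tfrac{200e}{3}n(\ln n + 1)$ iterations, hence $\tfrac{200e}{3}Nn(\ln n+1)$ fitness evaluations via the factor-$N$ cost per iteration. For the tail bound I would reuse the stochastic-domination argument verbatim: dominate $T$ by $Z = Z^+ + Z^-$, a sum of independent geometric variables with success probabilities at least $\tfrac{3}{100e}\cdot\tfrac in$, apply the Chernoff bound for sums of geometrics (\cite[Lemma~4]{DoerrD18}) to each of $Z^+$ and $Z^-$ to get $\Pr[Z^{\pm} \ge (1+\delta)\tfrac{100e}{3}n\ln n] \le n^{-\delta}$, and union-bound.

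The hard part is the middle paragraph, that is, converting Lemma~\ref{lem:winner} into a constant per-iteration probability. The two-permutation case is immediate, but the independent-tournament case needs the aggregation step above, and the delicate point is verifying that the resulting constant stays bounded away from zero for \emph{all} admissible population sizes. The correction term $\tfrac{3.5}{N-1}$ erodes the ideal constant $\tfrac 16$, so for the smallest allowed $N$ (when $n$ is small) one must check that $c = \tfrac{3}{100}$ is still attained, which is exactly the point at which the bound $n \ge 4$ and the generous leading constant earn their keep. Everything downstream is a faithful repeat of the random-parent analysis.
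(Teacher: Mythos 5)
Your overall plan is exactly the paper's: invoke Lemma~\ref{lem:keep} for persistence, use Lemma~\ref{lem:winner} to get a constant per-iteration probability of selecting a parent adjacent to an uncovered front point, and then rerun the waiting-time and geometric-domination machinery of Theorem~\ref{thm:ommeasy} with $p$ replaced by that constant. The decomposition, the factoring of the mutation probability $p_k^{\pm}$ out of the selection, and the tail bound via \cite[Lemma~4]{DoerrD18} all match the paper's proof.

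However, the one step you yourself single out as delicate does fail as you have set it up. You take the independent-tournament bound from the \emph{statement} of Lemma~\ref{lem:winner}, namely $\tfrac 1N\bigl(\tfrac 16 - \tfrac{3.5}{N-1}\bigr)$, and assert that the resulting constant still clears $c=\tfrac{3}{100}$ for all admissible $N$. It does not: the theorem allows $n=4$, hence $N$ as small as $4(n+1)=20$, and then $\tfrac 16 - \tfrac{3.5}{19} = \tfrac{19-21}{114} < 0$, so your per-tournament lower bound is vacuous and your aggregated bound $\tfrac{11}{12}\bigl(\tfrac 16 - \tfrac{3.5}{N-1}\bigr)p_k^{\pm}$ is nonpositive; for $n=5$ (so $N\ge 24$) it is positive but only about $0.013$, still below $\tfrac{3}{100}$. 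The paper's proof of the theorem avoids this by using $\tfrac 1N\bigl(\tfrac 16 - \tfrac{2.5}{N-1}\bigr)$, which is what the \emph{proof} of Lemma~\ref{lem:winner} actually derives (the lemma's statement and its proof disagree on this constant), and then checks $1-\exp\bigl(-\tfrac 16 + \tfrac{2.5}{19}\bigr) > 0.03$ at $N=20$. With the $2.5$ correction your argument also goes through ($\tfrac{11}{12}\bigl(\tfrac 16 - \tfrac{2.5}{19}\bigr) \approx 0.032 > 0.03$), so the gap is a constant-tracking error rather than a conceptual one --- but as written the claimed leading constant $\tfrac{200e}{3}$ is not established for $n\in\{4,5\}$, and the verification you defer is precisely the one that breaks.
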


\begin{proof}
  Thanks to Lemma~\ref{lem:winner}, we can essentially follow the arguments of the proof of Theorem~\ref{thm:ommeasy}. Let $y \in P_t$ be such that $f(y)$ is a neighbor of a point on the Pareto front that is not in $f(P_t)$. 
	
	For independent tournaments, by Lemma~\ref{lem:winner} a single tournament will select a parent $x$ with $f(x) = f(y)$, that is, also a neighbor of this uncovered point, with probability at least $\frac 1N (\frac 16 - \frac{2.5}{N-1})$. Hence the probability that at least one such parent is selected in this iteration is 
\begin{align*}
p = 1 - \left(1 - \tfrac 1N (\tfrac 16 - \tfrac{2.5}{N-1})\right)^N \ge 1 - \exp\left(-\tfrac 16 + \tfrac{2.5}{N-1}\right) >0.03,
\end{align*}
where the last inequality uses $N \ge 20$ from $n \ge 4$ and $N\ge 4(n+1)$.
	
	For two-permutation tournament selection, again by Lemma~\ref{lem:winner}, with probability at least $p = 1 - (1 - (\frac 16 - \frac{2.5}{N-1}))^2 >0.03 $ (since $N \ge 20$) a parent $x$ with $f(x) = f(y)$ is selected.
	
	With these values of $p$, the proof of Theorem~\ref{thm:ommeasy} extends to the two cases of tournament selection, and we know that the expected iterations to cover the full Pareto front is at most
\begin{align*}
\sum_{i = 0}^{n-1} &{}\frac{1}{p p_i^+} + \sum_{i = 1}^{n} \frac{1}{p p_i^-} \le \sum_{i = 0}^{n-1}\frac{1}{0.03\frac{n-i}{en}}+\sum_{i = 1}^{n}\frac{1}{0.03\frac{i}{en}} \\
&={} 2\sum_{i = 1}^{n}\frac{1}{0.03\frac{i}{en}} < \tfrac{200e}{3}n(\ln n + 1).
\end{align*}
	
We now discuss the concentration result. With the same arguments as in the proof of Theorem~\ref{thm:ommeasy}, but using the success probabilities $0.03\frac{n-k}{en}$ and $0.03\frac{k}{en}$ for $X^+_k$ and $X^-_k$ respectively and estimating $q_i\ge \frac{0.03}{e}\frac{i}{n}$, we obtain that for any $\delta \ge 0$, we have $\Pr[T\ge \tfrac{200e}{3}(1+\delta) n\ln n] \le 2n^{-\delta}.$
\end{proof}

\section{Runtime of the \mbox{NSGA-II} on \lotz}\label{sec:lotz}

We proceed with analyzing the runtime of the \mbox{NSGA-II} on the benchmark \lotz proposed by Laumanns, Thiele, and Zitzler~\cite{LaumannsTZ04}. This is the function $f:\{0, 1\}^n \to \N \times \N$ defined by
\[
f(x) = \big(f_1(x), f_2(x)\big) = \big( \sum_{i=1}^{n} \prod_{j=1}^i x_j, \sum_{i=1}^{n} \prod_{j=i}^n (1-x_j) \big)
\]
for all $x \in \{0,1\}^n$. Here the first objective is the so-called \leadingones function, counting the number of (contiguous) leading ones of the bit string, and the second objective counts in an analogous fashion the number of trailing zeros. Again, the aim is to maximize both objectives. Different from \omm, here many solutions exist that are not Pareto optimal. The known runtimes for this benchmark are $\Theta(n^3)$ for the SEMO~\cite{LaumannsTZ04}, $O(n^3)$ for the GSEMO~\cite{Giel03}, and $O(\mu n^2)$ for the \sibea with population size $\mu \ge n+1$~\cite{BrockhoffFN08}.

Similar to \omm, we can show that when the population size is large enough, an objective value on the Pareto front stays in the population from the point on when it is discovered.

\begin{lemma} 
Consider one iteration of the \mbox{NSGA-II} with population size $N \ge 4(n+1)$ optimizing the \lotz function. Assume that in some iteration 
$t$ the combined parent and offspring population $R_t = P_t \cup Q_t$ 
contains a solution $x$ with rank one. Then also the next parent population $P_{t+1}$ contains an individual $y$ with $f(y) = f(x)$. In particular, once the parent population contains an individual with objective value $(k, n-k)$, it will do so for all future generations.
\label{lem:keeplotz}
\end{lemma}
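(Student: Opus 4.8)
The plan is to reuse the crowding-distance computation of Lemma~\ref{lem:keep}, but first to reduce to the only situation in which it is actually needed, namely when the whole selection takes place inside the first front. Recall that in Step~\ref{ste:rank} of Algorithm~\ref{alg:NSGA-II} the index $i^*$ satisfies $\sum_{i=1}^{i^*-1}|F_i| < N \le \sum_{i=1}^{i^*}|F_i|$, and that in Steps~\ref{ste:final front}--\ref{ste:new parents} the fronts $F_1, \dots, F_{i^*-1}$ are kept in their entirety while only $F_{i^*}$ is truncated according to crowding distance. I would therefore split into two cases. If $i^* \ge 2$, then $|F_1| \le \sum_{i=1}^{i^*-1}|F_i| < N$, so $F_1 \subseteq P_{t+1}$; since $x$ has rank one, $x \in F_1 \subseteq P_{t+1}$ and we may take $y = x$. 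The only remaining case is $i^* = 1$, that is, $|F_1| \ge N$, where the selection keeps the $N$ individuals of $F_1$ with largest crowding distance.

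For this case I would re-establish the two facts that drove Lemma~\ref{lem:keep}, now restricted to $F_1$. First, the objective values occurring in $F_1$ form an antichain: for two rank-one individuals with distinct objective values, neither strictly dominates the other, so their $f_1$-values must differ (equal $f_1$ together with unequal $f_2$ would force strict domination). Since $f_1$ takes values in $[0..n]$, the number of distinct objective values in $f(F_1)$ is at most $n+1$. Second, the crowding distances here are computed within $F_1$ (Step~\ref{ste:cDis}), and for individuals sharing one objective value the same cancellation as in Lemma~\ref{lem:keep} applies: sorting $F_1$ by $f_1$ and by $f_2$, every individual in the interior of an equal-value block contributes $0$ in both objectives, so at most the two boundary elements in each of the two sorted orders, i.e.\ at most four individuals per objective value, have positive crowding distance.

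Combining the two facts yields at most $4(n+1) \le N$ individuals of $F_1$ with positive crowding distance, and for each rank-one objective value at least one representative (a boundary element of its block, with positive or infinite crowding distance) is among them. As Step~\ref{ste:final front} retains the $N$ individuals of largest crowding distance, every individual with positive crowding distance survives, in particular a $y$ with $f(y) = f(x)$. The ``in particular'' claim then follows by a short induction: a solution with objective value $(k, n-k)$ lies on the Pareto front, hence is never strictly dominated and always has rank one in $R_t$, so the statement just proved applies in every generation and the objective value is retained forever.

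I expect the main obstacle to be the bookkeeping around the selection step rather than the crowding-distance count itself. The genuinely new feature compared with \omm is that \lotz has dominated solutions and therefore several rank classes, so the convenient fact ``all individuals of $R_t$ have rank one'' is no longer available; the case distinction on $i^*$ is what replaces it. One then has to verify that the antichain bound of $n+1$ distinct rank-one objective values is exactly what makes the $4(n+1) \le N$ counting go through inside $F_1$, which is where the large population size $N \ge 4(n+1)$ is consumed.
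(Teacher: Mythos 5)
Your proposal is correct and follows essentially the same route as the paper's proof: bound $|f(F_1)|$ by $n+1$ via the antichain property of rank-one values, reuse the at-most-four-individuals-per-objective-value-with-positive-crowding-distance count from Lemma~\ref{lem:keep} to get at most $4(n+1)\le N$ survivors-by-right, and conclude with a short induction for the ``in particular'' claim. Your explicit case distinction on $i^*$ merely spells out what the paper compresses into the remark that the algorithm keeps the $N$ individuals of smallest rank, breaking ties by crowding distance.
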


\begin{proof}
Let $\Rnd$ be the set of solutions of rank one, that is, the set of solutions in $R_t$ that are not dominated by any other individual in $R_t$. By definition of dominance, for each $v_1 \in \{f_1(x) \mid x \in \Rnd\}$, there exists a unique $v_2$ such that $(v_1, v_2) \in \{f(x) \mid x \in \Rnd\}$. Therefore, $|f(\Rnd)|$ is at most $n+1$. We now reuse the argument from the proof of Lemma~\ref{lem:keep} for \omm that for each objective value, there are at most $4$ individuals with this objective value and positive crowding distance. Thus the number of individuals in $\Rnd$ with positive crowding distance is at most $4(n+1)\leq N$. Since the \mbox{NSGA-II} keeps $N$ individuals with smallest rank and largest crowding distance in case of a tie, we know that the individuals with rank one and positive crowding distance will all be kept. This shows the first claim. 

For the second claim, let $x \in P_t$ with $f(x) = (k, n-k)$ for some $k$. Since $x$ lies on the Pareto front of \lotz, the rank of $x$ in $R_t$ is necessarily one. Hence by the first claim, a $y$ with $f(y) = f(x)$ will be contained in $P_{t+1}$. A simple induction extends this finding to all future generations. 
\end{proof}

Since not all individuals are on the Pareto front, the runtime analysis for \lotz function is slightly more complex than for \omm. We analyze the process in two stages: the first stage lasts until we have found both extremal solutions of the Pareto front. In this phase, we argue that the first (resp.~second) objective value increases by one every (expected) $O(n)$ iterations. Consequently, after an expected number of $O(n^2)$ iterations, we have an individual $x$ in the population with $f_1(x)=n$ (resp.~$f_2(x)=n$), which are the desired extremal individuals. The second stage, where we complete the Pareto front from existing Pareto solutions, can be analyzed in a similar manner as for \omm in Theorem~\ref{thm:ommeasy}, noting of course the different probabilities to generate a new solution on the Pareto front. We start with the two easier parent selections and discuss tournament selection separately in Theorem~\ref{thm:lotzbinary}.

\begin{theorem}
Consider optimizing the \lotz function via the \mbox{NSGA-II} with one of the following four ways to generate the offspring population in Step~\ref{ste:generate} in Algorithm~\ref{alg:NSGA-II}, namely applying one-bit mutation or standard bit-wise mutation once to each parent or $N$ times choosing a parent uniformly at random and applying one-bit mutation or standard bit-wise mutation to it. If the population size $N$ is at least $4(n+1)$, then the expected runtime is $\frac{2e^2}{e-1} n^2$ iterations and $\frac{2e^2}{e-1} Nn^2$ fitness evaluations. Besides, let $T$ be the number of iterations to reach the full Pareto front. Then 
\begin{align*}
\Pr\left[T\ge \frac{2e^2(1+\delta)}{e-1} n^2\right] \le \exp\left(-\frac{\delta^2}{2(1+\delta)}(2n-1)\right)
\end{align*}
holds for any $\delta \ge 0$.
\label{thm:lotzeasy}
\end{theorem}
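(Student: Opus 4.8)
The plan is to follow the template of Theorem~\ref{thm:ommeasy}, the only structural change being that on \lotz a Pareto point $(k,n-k)$ has the unique preimage $1^k0^{n-k}$ and each of its front-neighbours is produced by flipping one \emph{specific} bit. Hence the per-step success probability is $\Theta(1/n)$ and \emph{constant in the index}, rather than the $\Theta((n-k)/n)$ of \omm, so the harmonic sum of Theorem~\ref{thm:ommeasy} is replaced by a sum of geometric variables with a common success probability. As before, Lemma~\ref{lem:keeplotz} guarantees that a Pareto objective value, once present, is never lost, so the runtime is bounded by the total waiting time for a sequence of improving events and is dominated by a sum of independent geometric variables; from this both the expectation and the tail bound follow. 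I would split the analysis into two stages: reaching an extremal point of the front (Stage~1) and sweeping out the remaining points (Stage~2).

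For Stage~1 I would track $a_t=\max\{f_1(x)\mid x\in P_t\}$. Among the individuals of $R_t$ attaining the maximal $f_1$-value, the one of maximal $f_2$-value is not strictly dominated, hence has rank one, so by Lemma~\ref{lem:keeplotz} its objective value survives into $P_{t+1}$; therefore $a_t$ is non-decreasing. Any $x\in P_t$ with $f_1(x)=a<n$ has the form $1^a0\cdots$, so flipping bit $a+1$ (and, for standard bit-wise mutation, leaving bits $1,\dots,a$ unchanged) strictly increases $f_1$; this happens with probability at least $\frac1n$ for one-bit and at least $\frac1n(1-\frac1n)^{n-1}\ge\frac1{en}$ for bit-wise mutation. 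Multiplying by the probability $p$ that such a parent is taken up for mutation ($p=1$ when each parent produces an offspring, $p\ge 1-\frac1e$ when the $N$ parents are drawn uniformly at random), each increase of $a_t$ succeeds with probability at least $q:=(1-\frac1e)\frac1{en}$ per iteration. Thus after at most $n$ geometrically distributed waiting times $a_t$ reaches $n$, i.e.\ $1^n=(n,0)$ is in the population; the symmetric statement for $\max f_2$ reaches $(0,n)$.

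For Stage~2 I would fill the interior exactly as in Theorem~\ref{thm:ommeasy}: from a Pareto individual $1^k0^{n-k}$ present in $P_t$, flipping bit $k+1$ yields $(k+1,n-k-1)$ and flipping bit $k$ yields $(k-1,n-k+1)$, each again with probability at least $\frac1{en}$ times the selection probability $p$, hence at least $q$ per iteration; by Lemma~\ref{lem:keeplotz} every newly found point persists. Sweeping across the front from the extreme(s) therefore covers the remaining points, one geometric waiting time each. Collecting both stages, $T$ is stochastically dominated by $Z=\sum_{i=1}^{2n-1}G_i$ with $G_i$ independent and geometrically distributed with success probability $q$ (the count follows from bounding the improving steps across the two stages; a crude count gives $2n$ and careful bookkeeping the value $2n-1$ used in the stated tail). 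By linearity, $E[T]\le E[Z]=(2n-1)\frac{e^2n}{e-1}<\frac{2e^2}{e-1}n^2$ iterations, and multiplying by $N$ gives the bound on fitness evaluations. For the tail I would invoke the same Chernoff bound for a sum of geometric variables as in Theorem~\ref{thm:ommeasy} (\cite{DoerrD18,Doerr20bookchapter}); since here all success probabilities are equal, it takes the clean form $\Pr[Z\ge(1+\delta)\frac{2n-1}{q}]\le\exp(-\frac{\delta^2}{2(1+\delta)}(2n-1))$. As $(1+\delta)\frac{2n-1}{q}=(1+\delta)(2n-1)\frac{e^2n}{e-1}\le(1+\delta)\frac{2e^2}{e-1}n^2$ and $Z$ dominates $T$, this yields the claimed tail.

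The main obstacle is Stage~1, precisely the feature that distinguishes \lotz from \omm: no initial individual need lie on the Pareto front, so one cannot simply start the filling sweep. The points to get right are that the extremal statistics $\max f_1$ and $\max f_2$ are genuinely monotone — which rests on the rank-one/retention argument of Lemma~\ref{lem:keeplotz} together with the observation that a leading-ones-maximal individual admits exactly one improving bit — and that the total number of improving steps is $\Theta(n)$ (here, $2n-1$) rather than being inflated by treating ``reach both extremes'' and ``fill the interior'' as independent tasks. The per-iteration selection probability $p$ is elementary for the four operators considered here; its real difficulty surfaces only for tournament selection and is deferred to Theorem~\ref{thm:lotzbinary}.
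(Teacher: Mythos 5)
Your proposal follows essentially the same route as the paper's proof: a two-stage decomposition (first drive $\max\{f_1(x)\mid x\in P_t\}$, which is non-decreasing by Lemma~\ref{lem:keeplotz}, up to a Pareto point; then fill the front neighbor by neighbor), with per-iteration success probability at least $(1-\tfrac1e)\tfrac1{en}$, stochastic domination by a sum of iid geometric variables, and the same Chernoff bound for the tail. The only quibble is your ``careful bookkeeping'' to $2n-1$ summands: the cited geometric Chernoff bound puts (number of summands $-\,1$) in the exponent, so $2n-1$ summands would yield $2n-2$ there, whereas the crude count of $2n$ (stage~1 needs at most $n$ improvements of $f_1$, stage~2 at most $n$ further front points) is what produces exactly the stated exponent $2n-1$.
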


\begin{proof}
Consider one iteration $t$ of the first stage, that is, we have  $\Rtp = \{ x \in P_t \mid f_1(x) + f_2(x) = n\} = \emptyset$. Let $v_t = \max \{ f_1(x) \mid x \in P_t \}$ and let $\Rbest = \{x \in P_t \mid f_1(x) = v_t \}$. Note that by Lemma~\ref{lem:keeplotz}, $v_t$ is non-decreasing over time. Let $x \in \Rbest$. Let $p$ denote the probability that $x$ is chosen as a parent to be mutated (note that this probability is independent of $x$ for the two selection schemes regarded here). Conditional on that, let $p^*$ be a lower bound (independent of $x$) on the probability that $x$ generates a solution with a larger $f_1$-value. Then the expected number of iterations to obtain a solution with better $f_1$-value is at most $\frac{1}{p p^*}$. Consequently, the expected number of iterations to obtain a $f_1$-value of $n$, thus a solution on the Pareto front, is at most $(n-k_0) \frac{1}{p p^*} \le n\frac{1}{pp^*}$, where $k_0$ is the maximum \lo value in the initial population.

For the second stage, let $x \in P_t^p$ be such that a neighbor of $f(x)$ on the front is not yet covered by $P_t$. Let $p'$ denote the probability that $x$ is chosen as a parent to be mutated. Conditional on that, let $\pcross$ denote a lower bound (independent of $x$) for the probability to generate a particular neighbor of $x$ on the front. Consequently, the probability that $R_t$ covers an extra element of the Pareto front is at least $p' \pcross$. Since Lemma~\ref{lem:keeplotz} implies that any existing \lotz value on the front will be kept in the following iterations, we know that the expected number of iterations for this progress is at most $\frac{1}{p' \pcross}$. Since $n$ such progresses are sufficient to cover the full Pareto front, the expected number of iterations to cover the whole Pareto front is at most $n\frac{1}{p' \pcross}$. Therefore, the expected total runtime is at most $n \frac{1}{p p^*} + n\frac{1}{p' \pcross}$ iterations.

We recall from Theorem~\ref{thm:ommeasy} that we have $p = p' = 1$ when selecting each parent once and we have $p = p' = 1- (1-\frac{1}{N})^N \ge 1 - \frac{1}{e}$ when choosing parents randomly. To  estimate $p^*$ and $\pcross$, we note that the desired progress can always be obtained by flipping one particular bit. Hence for one-bit mutation, we have $p^* = \pcross =  \frac{1}{n}$. For standard bit-wise mutation, $\frac{1}{n}(1-\frac{1}{n})^{n-1} \ge \frac{1}{en}$ is a valid lower bound for $p^*$ and $\pcross$.

With these estimates, we obtain in all cases an expected runtime of 
at most
\begin{align*}
n \frac{1}{p p^*} + n\frac{1}{p' \pcross} \le n \frac{1}{(1-\frac 1e)\frac{1}{en}} +  n \frac{1}{(1-\frac 1e)\frac{1}{en}} = \frac{2e^2n^2}{e-1}
\end{align*}
iterations, hence $\frac{2e^2}{e-1} Nn^2$ fitness evaluations.

Now we will prove the concentration result. The time to cover the full Pareto front is divided into two stages as discussed before. It is not difficult to see that the first stage is to reach a Pareto optimum for the first time, and the corresponding runtime is dominated by the sum of $n$ independent geometric random variables with success probabilities of $(1 - \frac{1}{e})\frac{1}{en}$. The second stage is to cover the full Pareto front, and the corresponding runtime is dominated by the sum of another $n$ such independent geometric random variables. Formally, let $X_1,\dots,X_{2n}$ be independent geometric random variables with success probabilities of $(1 - \frac{1}{e})\frac{1}{en}$, and let $T$ be the number of iterations to cover the full Pareto front. Then $Z:=\sum_{i=1}^{2n}X_i$ stochastically dominates $T$, and $E[Z]=\frac{2e^2n^2}{e-1}$. From a Chernoff bound for sums of independent identically distributed geometric random variables~\cite[(1.10.46) in Theorem~1.10.32]{Doerr20bookchapter}, we have that for any $\delta \ge 0$,
\begin{equation*}
\Pr\left[Z\ge (1+\delta)\frac{2e^2n^2}{e-1}\right] \le \exp\left(-\frac{\delta^2}{2}\frac{2n-1}{1+\delta}\right). 
\end{equation*}
Since $Z$ dominates $T$, we have proven this theorem.
\end{proof}

We now study the runtime of the \mbox{NSGA-II} using binary tournament selection. Compared to \omm, we face the additional difficulty that now rank one solutions can exist which are not on the Pareto front. Due to their low rank, they could perform well in the selection, but being possibly far from the front, they are not interesting as parents. We need a sightly different general proof outline to nevertheless argue that sufficiently often a parent on the Pareto front generates a new neighbor on the front. Also, since not all individuals are on the Pareto front, we do not have anymore the property that the difference between the maximum and minimum value is the same for both objectives. We overcome this by first showing the \mbox{NSGA-II} finds the two extremal points of the Pareto front in reasonable time (then the maximum values are both $n$ and the minimum values are both~$0$).

\begin{theorem}
Consider optimizing the \lotz function via the \mbox{NSGA-II}. Assume that the parents for variation are chosen either via $N$ independent random tournaments between different individuals or via the two-permutation implementation of binary tournaments. Assume that these parents are mutated via one-bit or standard bit-wise mutation. If the population size $N$ is at least $4(n+1)$, then the expected runtime is at most $15en^2$ iterations and at most $15eNn^2$  fitness evaluations. Besides, let $T$ be the number of iterations to reach the full Pareto front, then 
\begin{align*}
\Pr\left[T\ge \frac{(1+\delta)100e}{3} n^2\right] \le \exp\left(-\frac{\delta^2}{2(1+\delta)}(3n-1)\right)
\end{align*}
holds for any $\delta \ge 0$.
\label{thm:lotzbinary}
\end{theorem}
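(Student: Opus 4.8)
The plan is to mirror the two-stage analysis of Theorem~\ref{thm:lotzeasy}, but to prepend a stage that first drives the population to \emph{both} extremal Pareto points $(n,0)$ and $(0,n)$ before running an \omm-style tournament argument on the interior. Isolating the two extremes is exactly what the remarks preceding the theorem demand: the crowding-distance estimates of Lemmas~\ref{lem:potential} and~\ref{lem:others} are normalised by $v_1^{\max}-v_1^{\min}$ and $v_2^{\max}-v_2^{\min}$, and only once $(n,0)$ and $(0,n)$ are present (and, by Lemma~\ref{lem:keeplotz}, remain present forever) do we recover the symmetry $v_1^{\max}-v_1^{\min}=v_2^{\max}-v_2^{\min}=n$ on which those lemmas rely. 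I would therefore split a run into three phases, each contributing at most $n$ ``successes'' --- reach $(n,0)$, reach $(0,n)$, and fill the $n-1$ interior front points --- which is precisely the $3n$ visible in the claimed tail bound (versus $2n$ for \lotz under the easier selection schemes).

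For the first two phases I would track the rank-one individual $x^*$ of maximal $f_1$-value $v_t$ (respectively maximal $f_2$-value). By the uniqueness of the $f_2$-value per $f_1$-value among rank-one solutions (established in the proof of Lemma~\ref{lem:keeplotz}), $x^*$ is undominated and, being the $f_1$-extreme of the rank-one front, has infinite crowding distance. Hence $x^*$ wins every tournament it enters except possibly against the at most three other infinite-distance individuals, so it is selected as a parent with constant probability in an iteration; flipping the single bit just after its leading-ones block then raises $v_t$ with probability at least $\frac 1n$ for one-bit and at least $\frac{1}{en}$ for standard bit-wise mutation. The resulting offspring again attains the (new) maximal $f_1$-value, is therefore rank one, and is preserved by Lemma~\ref{lem:keeplotz}; so $v_t$ is non-decreasing and increases in each iteration with probability $\Omega(1/n)$. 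After at most $n$ such increases we reach $(n,0)$, and symmetrically $(0,n)$.

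The third phase is where the real work lies and where I expect the main obstacle. Once both extremes are fixed, the rank-one objective values again form a staircase (decreasing $f_2$ as $f_1$ grows, one $f_2$ per $f_1$) spanning the full range $n$ in both objectives, so Lemma~\ref{lem:potential} applies verbatim to this rank-one sub-population and guarantees that any front point $y$ still missing a front-neighbour has $\cDis(y)\ge\frac 2n$. The difficulty is that, unlike on \omm, rank-one individuals may lie strictly below the Pareto front, so the ``three consecutive points'' counting of Lemma~\ref{lem:others} breaks down (consecutive staircase points need not differ by a unit step). I would instead bound the number of competitors by a telescoping argument: the $f_1$-contributions to the crowding distance, summed over the interior of the rank-one front, telescope to at most $2$ (the numerator telescopes to $2(v_1^{\max}-v_1^{\min})$ and is normalised by $v_1^{\max}-v_1^{\min}$), and the $f_2$-contributions likewise, so the total crowding distance over the front is at most $4$; consequently at most $2n$ individuals can have crowding distance $\ge\frac 2n$, and adding the two genuine infinite-distance extremes gives at most $2n+2<N$ such individuals. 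This leaves a constant selection margin, so --- exactly as in Lemma~\ref{lem:winner} and Theorem~\ref{thm:ommbinary} --- a neighbour of any uncovered front point is chosen as parent with constant probability, and the single boundary-bit flip then creates the missing point with probability at least $\frac{1}{en}$.

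Finally I would assemble the three phases: each contributes at most $n$ successes, each occurring per iteration with probability $\Omega(1/n)$, so the iteration count is stochastically dominated by a sum of $3n$ independent geometric variables, just as in the proof of Theorem~\ref{thm:lotzeasy}. Taking expectations yields the $15en^2$ iteration bound (hence $15eNn^2$ evaluations), and feeding the same sum into the Chernoff bound for sums of i.i.d.\ geometric variables from~\cite[Theorem~1.10.32]{Doerr20bookchapter} --- now with $3n$ terms, whence the factor $3n-1$ in the exponent --- gives the stated tail estimate. The only genuinely new ingredient beyond Theorems~\ref{thm:lotzeasy} and~\ref{thm:ommbinary} is the crowding-distance bookkeeping of the third phase in the presence of off-front rank-one solutions, which is the step I expect to demand the most care.
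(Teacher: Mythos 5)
Your proposal is correct and follows the same three-phase skeleton as the paper's proof (drive the population to $(n,0)$, then to $(0,n)$, then fill the interior; each phase contributes $n$ successes, whence the $3n-1$ in the tail bound), and your treatment of the first two phases via the infinite-crowding-distance $f_1$-extremal (resp.\ $f_2$-extremal) rank-one individual is essentially identical to the paper's. Where you genuinely diverge is the key competitor count in the third phase. The paper distinguishes the cases $|f(F_1)|\le 0.8(n+1)$ and $|f(F_1)|>0.8(n+1)$, and in the second case splits $f(F_1)$ into values with and without both unit-step neighbours present, bounding the number of dangerous individuals per value by $4$ or $2$ respectively; this yields at least $N-3.6(n+1)$ sure losers and a per-tournament selection probability of about $0.1/N$. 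You instead observe that the finite crowding-distance contributions telescope to at most $2$ per objective, so at most $2n$ individuals of $F_1$ can have finite crowding distance at least $\frac 2n$; together with the infinite-distance individuals this gives at most $2n+O(1)$ potential non-losers and hence roughly $N/2$ sure losers. Your argument is shorter, avoids the case distinction entirely, and yields a better selection probability (so the stated constants $15e$ and $\frac{100e}{3}$ follow a fortiori, though you should carry out that final arithmetic explicitly). Three small imprecisions to fix: (i) Lemma~\ref{lem:potential} does not apply \emph{verbatim}, because a rank-one individual off the Pareto front may sit at $f_1$-distance one from $f(x)$ even when the corresponding front neighbour is uncovered; for a missing right neighbour you must argue via the $f_2$-sorting (no rank-one individual can have $f_2=n-v_1-1$ without being the missing Pareto point), which is what the paper's ``in the same way as Lemma~\ref{lem:potential}'' silently does. (ii) Up to four (not two) individuals can have infinite crowding distance, which is harmless since $2n+4<N$. (iii) For the tail bound you need all three per-iteration success probabilities to be at least $\frac{0.09}{en}$ to match the constant $\frac{100e}{3}$; your bounds satisfy this, but say so.
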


\begin{proof}
  We first argue that, regardless of the initial state of the \mbox{NSGA-II}, it takes $O(n^2)$ iterations until the extremal point $(1, \dots, 1)$, which is the unique maximum of $f_1$, is in $P_t$. To this aim, let $X_t := \max\{f_1(x) \mid x \in P_t\}$ denote the maximum $f_1$ value in the parent population. We note that any $x \in P_t$ with $f_1(x) = X_t$ lies on the first front $F_1$ and that there is a $y \in P_t$ with infinite crowding distance and $f(y) = f(x)$, in particular, $f_1(y) = X_t$. 
	
	If parents are chosen via independent tournaments, such a $y$ has a $\frac 2N$ chance of being one of the two individuals of a fixed tournament. It then wins the tournament with at least 50\% chance (where the 50\% show up only in the rare case that the other individual also lies on the first front and has an infinite crowding distance). Hence the probability that this $y$ is chosen at least once as a parent to be mutated is at least $p = 1-(1-\frac 12 \frac{2}{N})^N \ge 1 - \frac 1e$. 
	
	When the two-permutation implementation of tournament selection is used, then $y$ appears in both permutations and has a random partner in both. Again, this partner with probability at most $\frac 12$ wins the tournament. Hence the probability that $y$ is selected as a parent at least once is at least $p = 1 - (\frac 12)^2 = \frac 34$. 
	
	Conditional on $y$ being chosen at least once, let us regard a fixed mutation step in which $y$ was selected as a parent. To mutate $y$ into an individual with higher $f_1$ value, it suffices to flip a particular single bit (namely the first zero after the initial contiguous segment of ones). The probability for this is $p^* = \frac 1n$ for one-bit mutation and $p^* = \frac 1n (1-\frac 1n)^{n-1} \ge \frac 1 {en}$ for standard bit-wise mutation. Denoting by $Y_t := \max\{f_1(x) \mid x \in R_t\}$ the maximum $f_1$ value in the combined parent and offspring population, we have just shown that $\Pr[Y_t \ge X_t + 1] \ge p p^* = \Omega(1/n)$ whenever $X_t < n$. We note that any $x \in R_t$ with $f_1(x) = Y_t$ lies on the first front $F_1$ of $R_t$ and that there is a $y \in R_t$ with infinite crowding distance and $f(y) = f(x)$, in particular, $f_1(y) = Y_t$. Consequently, such a $y$ will be kept in the next parent population $P_{t+1}$ (note that there are at most $4$ individuals in $F_1$ with infinite crowding distance -- since $N \ge 4$, they will all be included in $P_{t+1}$). This shows that we also have $\Pr[X_{t+1} \ge X_t + 1] \ge p p^*$ whenever $X_t < n$. By adding the expected waiting times for an increase of the $X_t$ value, we see that the expected time to have $X_t = n$, that is, to have $(1, \dots, 1) \in P_t$, is at most
\begin{align*}
\frac{n}{p p^*} \le \frac{n}{(1-\frac 1e)\frac{1}{en}} =\frac{e^2n^2}{e-1}
\end{align*}
iterations. 
		
	By a symmetric argument, we see that after another at most $\frac{e^2}{e-1}n^2$ iterations, also the other extremal point $(0, \dots, 0)$ is in the population (and remains there forever by Lemma~\ref{lem:keeplotz}). 

With now both extremal points of the Pareto front covered, we analyze the remaining time until the Pareto front is fully covered. We note that by Lemma~\ref{lem:keeplotz}, the number of Pareto front points covered cannot decrease. Hence it suffices to prove a lower bound for the probability that the coverage increases in one iteration. This is what we do now.

Assume that the Pareto front is not yet fully covered. Since we have some Pareto optimal individuals, there also is a Pareto optimal individual $x \in P_t$ such that $f(x)$ is a neighbor of a point $v$ on the Pareto front that is not covered. Since we have both extremal points in the Pareto front, the differences between the maximum and minimum value are the same for both objectives (namely $n$). Consequently, in the same way as in the proof of Lemma~\ref{lem:potential}, we know that there is also such a $y$ with $f(y)=f(x)$ and with crowding distance at least $\frac{2}{n}$.

We estimate the number of individuals in $P_t \setminus \{y\}$ which could win a tournament against this $y$. Clearly, these can only be individuals in the first front $F_1$ of the non-dominated sorting. Assume first that $|f(F_1)| \le 0.8 (n+1)$. We note that, just by the definition of crowding distance and in a similar fashion as in the proof of Lemma~\ref{lem:keep}, for each $v \in f(F_1)$ there are at most four individuals with $f$ value equal to~$v$ and positive crowding distance. All other individuals in $F_1$ have a crowding distance of zero (and thus lose the tournament against~$y$), as do all individuals not in~$F_1$. Consequently, there are at least $N_0 = N - 4 \cdot 0.8 (n+1)$ individuals other than $y$ that would lose a tournament against~$y$. 

Assume now that $m := |f(F_1)| > 0.8 (n+1)$. Since $F_1$ consists of pair-wise incomparable solutions or solutions with identical objective value (which we may ignore for the following argument), we have $|f_1(F_1)| = |f_2(F_1)| = m$. For any $v = (v_1, v_2)$, let $v^+ := (v_1+1,v_2-1)$ and $v^-:=(v_1-1, v_2+1)$. Then we divide $f(F_1)$ into two disjoint sets $U_1=\{v \in f(F_1) \cap [1..n-1]^2 \mid v^+ \notin f(F_1) \text{ or } v^- \notin f(F_1)\}$ and $U_2 = f(F_1) \setminus U_1$. Since both $f_1(F_1)$ and $f_2(F_1)$ are subsets of $[0..n]$, which has $n+1$ elements, we see that less than $0.2(n+1)$ of the values in $[0..n]$ are missing in $f_1(F_1)$, and analogously in $f_2(F_1)$. Since each value missing in $f_1(F_1)$ or $f_2(F_1)$ leads to at most two values in $U_1$, we have $|U_1|<4 \cdot 0.2(n+1) = 0.8(n+1)$. For the values in $U_1$, we use the blunt estimate from above that at most $4$ individuals with this objective value and positive crowding distance exist. For the values $v\in U_2$, we are in the same situation as in Lemma~\ref{lem:others}, and thus there are at most two individuals $x \in F_1$ with $f(x) = v$ and crowding distance at least $\frac{2}{n}$ (this was not formally proven in Lemma~\ref{lem:others} for the case that $v \in \{(0,n),(n,0)\}$ and the unique neighbor of $v$ is in $f(F_1)$, but it is easy to see that in this case only the at most two $x$ with $f(x)=v$ and infinite crowding distance can have a crowding distance of at least $\frac 2n$). Consequently, there are more than
\begin{align*}
N-4|U_1|-2|U_2|&={}N-4|U_1|-2(m-|U_1|)=N-2|U_1|-2m \\
&>{} N-2\cdot0.8(n+1)-2(n+1)=N-3.6(n+1)
\end{align*}
individuals in $P_t \setminus \{y\}$ that would lose against~$y$. Note that this bound is weaker than the one from the first case, so it is valid in both cases. 

From this, we now estimate the probability that $y$ is selected as a parent at least once. We first regard the case of independent tournaments. The probability that $y$ is the winner of a fixed tournament is at least the probability that it is chosen as the first contestant times the probability that one of the at least $N - 3.6(n+1)$ sure losers is chosen as the second contestant. This probability is at least $\frac 1N \cdot \frac{N - 3.6(n+1)}{N-1} \ge \frac 1N \frac{0.4(n+1)}{4n+3} \ge 0.1 \frac 1N$. Hence the probability $p$ that $y$ is chosen at least once as a parent for mutation is at least $p \ge 1 - (1 - 0.1 \frac 1N)^N \ge 1 - \exp(-0.1) \ge 0.09$. For the two-permutation implementation of tournament selection, $y$ appears in both permutations and has a random partner in each of them. Hence the probability that $y$ wins at least one of these two tournaments is at least $p \ge 1 - (1 - \frac{N - 3.6(n+1)}{N-1})^2 \ge 1-(1-0.1)^2 = 0.19$. 

Conditional on $y$ being selected at least once, we regard a mutation step in which $y$ is selected. The probability $p^*$ that the Pareto optimal $y$ is mutated into the unique Pareto optimal bit string $z$ with $f(z) = v$ is $p^*= \frac 1n$ for one-bit mutation and $p^* = \frac 1n (1-\frac 1n)^{n-1} \ge \frac 1{en}$ for standard bit-wise mutation. Consequently, the probability that one iteration generates the missing Pareto front value $v$ is at least $p p^*$, the expected waiting time for this is at most $\frac{1}{p p^*}$ iterations, and the expected time to create all missing Pareto front values is at most 
\begin{align*}
\frac{n}{p p^*}  \le \frac{n}{0.09 \frac{1}{en}} = \frac{100en^2}{9}
\end{align*}
iterations. Hence, the runtime for the full coverage of the Pareto front starting from the initial population is at most
\begin{align*}
\tfrac{e^2}{e-1}n^2+\tfrac{e^2}{e-1}n^2+ \tfrac{100e}{9}n^2 < 15en^2
\end{align*}
iterations, which is at most $15eNn^2$ fitness evaluations.

Now we will prove the concentration result. Note that in this proof, we consider three phases, the first phase to reach the extremal point $(1,\dots,1)$, the second phase to reach $(0,\dots,0)$, and the third phase to cover the full Pareto front. The runtime for each phase is dominated by the sum of $n$ independent geometric random variables with success probabilities of $\frac{0.09}{en}$. Formally, let $X_1,\dots,X_{3n}$ be independent geometric random variables with success probabilities of $\frac{0.09}{en}$, and let $T$ be the number of iterations to cover the full Pareto front. Then we have $Z:=\sum_{i=1}^{3n}X_i$ stochastically dominates $T$, and $E[Z]=\frac{3en^2}{0.09}$. From the Chernoff bound~\cite[(1.10.46) in Theorem~1.10.32]{Doerr20bookchapter}, we have that for any $\delta \ge 0$,
\begin{equation*}
\Pr\left[Z\ge (1+\delta)\frac{100e}{3}n^2\right] \le \exp\left(-\frac{\delta^2}{2}\frac{3n-1}{1+\delta}\right). 
\end{equation*}
Since $Z$ dominates $T$, this shows the theorem.
\end{proof}

\section{An Exponential Lower Bound for Small Population Size}\label{sec:lb}

In this section, we prove a lower bound for a small population size. Since lower bound proofs can be quite complicated -- recall for example that there are matching upper and lower bounds for the runtime of the SEMO (using one-bit mutation) on \omm and \lotz, but not for the GSEMO (using bit-wise mutation) -- we restrict ourselves to the simplest variant using each parent once to generate one offspring via one-bit mutation. From the proofs, though, we are optimistic that our results, with different implicit constants, can also be shown for all other variants of the \mbox{NSGA-II} regarded in this work. Our experiments support this believe, see Figure~\ref{fig:ratiosVars} in Section~\ref{sec:exp}.

Our main result is that this \mbox{NSGA-II} takes an exponential time to find the whole Pareto front (of size $n+1$) of \omm when the population size is $n+1$. This is different from the SEMO and GSEMO algorithms (which have no fixed population size, but which will never store a population larger than $n+1$ when optimizing \omm) and the \sibea with population size $\mu = n+1$. Even stronger, we show that there is a constant $\eps > 0$ such that when the current population $P_t$ covers at least $|f(P_t)| \ge (1-\eps)(n+1)$ points on the Pareto front of \omm, then with probability $1 - \exp(-\Theta(n))$, the next population $P_{t+1}$ will cover at most $|f(P_{t+1})| \le (1-\eps)(n+1)$ points on the front. Hence when a population covers a large fraction of the Pareto front, then with very high probability the next population will cover fewer points on the front. 
 When the coverage is smaller, that is, $|f(P_t)| \le (1-\eps)(n+1)$, then with probability $1 - \exp(-\Theta(n))$ the combined parent and offspring population $R_t$ will miss a constant fraction of the Pareto front.
From these two statements, it is easy to see that there is a constant $\delta$ such that with probability $1 - \exp(-\Omega(n))$, in none of the first $\exp(\Omega(n))$ iterations the combined parent and offspring population covers more than $(1-\delta)(n+1)$ points of the Pareto front.

Since it is the technically easier one, we start with proving the latter statement that a constant fraction of the front not covered by $P_t$ implies also a constant fraction not covered by $R_t$. Before stating the formal result and proof, let us explain the reason behind this result. With a constant fraction of the front not covered by~$P_t$, also a constant fraction that is $\Omega(n)$ away from the boundary points $(0,n)$ and $(n,0)$ is not covered. These values have the property that from an individual corresponding to either of their neighboring positions, an individual with this objective value can only be generated with constant probability via one-bit mutation. Again a constant fraction of these values have only a constant number of individuals on neighboring positions. These values thus have a (small) constant probability of not being generated in this iteration. This shows that in expectation, we are still missing a constant fraction of the Pareto front in $R_t$. Via the method of bounded differences (exploiting that each mutation operation can change the number of missing elements by at most one), we turn this expectation into a bound that holds with probability $1 - \exp(-\Omega(n))$.

\begin{lemma}
Let $\eps \in (0,1)$ be a sufficiently small constant. Consider optimizing the \omm benchmark via the \mbox{NSGA-II} applying one-bit mutation once to each parent individual. Let the population size be $N=n+1$. Assume that $|f(P_t)| \le (1-\eps)(n+1)$. Then with probability at least $1-\exp(-\Omega(n))$, we have $|f(R_t)| \le (1 - \frac 1 {10} \eps (\tfrac15\eps - \tfrac 2n)^{5/\eps}) (n+1)$.
\label{lem:Rt}
\end{lemma}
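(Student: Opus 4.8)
The plan is to work entirely in terms of one-counts: identify each front point $(n-k,k)$ of \omm with the value $k \in [0..n]$, and let $n_k$ denote the number of individuals in $P_t$ with exactly $k$ ones. The hypothesis $|f(P_t)| \le (1-\eps)(n+1)$ says that at least $\eps(n+1)$ values $k$ are \emph{missing} from $P_t$ (i.e.\ $n_k = 0$). Since $P_t$ is fixed, such a $k$ is covered by $R_t$ only if some offspring of this iteration has exactly $k$ ones, which requires a parent with $k-1$ ones flipping one of its zeros (probability $\frac{n-k+1}{n}$) or a parent with $k+1$ ones flipping one of its ones (probability $\frac{k+1}{n}$). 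As one-bit mutation acts independently on the $N=n+1$ parents, for a value $k$ missing from $P_t$,
\[
\Pr[k \text{ not covered by } R_t] = \Bigl(\tfrac{k-1}{n}\Bigr)^{n_{k-1}} \Bigl(\tfrac{n-k-1}{n}\Bigr)^{n_{k+1}}.
\]
I want to exhibit many missing values for which this stays bounded below by a positive constant, so that $R_t$ still misses a constant fraction of the front in expectation.

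First I would restrict to the middle region $M$ of values at distance at least $\frac{\eps}{5}(n+1)$ from both $0$ and $n$; there both bases above are at least $\frac{\eps}{5}-O(1/n)$. At most $\frac{2\eps}{5}(n+1)$ values lie outside $M$, so at least $\frac{3\eps}{5}(n+1)$ of the missing values lie in $M$. Next comes the combinatorial heart: double counting gives $\sum_k (n_{k-1}+n_{k+1}) \le 2\sum_k n_k = 2(n+1)$, so by Markov's inequality at most $\frac{2(n+1)}{5/\eps} = \frac{2\eps}{5}(n+1)$ values can have more than $5/\eps$ neighbouring individuals. Hence the set $A$ of values that are missing from $P_t$, lie in $M$, and satisfy $n_{k-1}+n_{k+1} \le 5/\eps$ has size $|A| \ge \frac{3\eps}{5}(n+1) - \frac{2\eps}{5}(n+1) = \frac{\eps}{5}(n+1)$. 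For each $k \in A$, since both bases lie in $[\frac{\eps}{5}-O(1/n),\,1]$ and the total exponent is at most $5/\eps$,
\[
\Pr[k \text{ not covered by } R_t] \ge \Bigl(\tfrac{\eps}{5}-O(\tfrac1n)\Bigr)^{5/\eps}.
\]
Letting $Y$ be the number of values in $A$ not covered by $R_t$, linearity of expectation yields $E[Y] \ge \frac{\eps}{5}(n+1)\bigl(\frac{\eps}{5}-O(1/n)\bigr)^{5/\eps} = \Theta(n)$, the desired expected shortfall.

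Finally I would turn this expectation into a high-probability bound by the method of bounded differences. The quantity $Y$ is a function of the $N=n+1$ independent mutation choices (one flipped bit per parent), and re-sampling a single parent's mutation only moves its offspring between the two one-counts $j\pm1$, hence changes the coverage of $A$, and therefore $Y$, by at most a constant. McDiarmid's inequality then gives $\Pr[Y \le \tfrac12 E[Y]] \le \exp\bigl(-\Omega((E[Y])^2/N)\bigr) = \exp(-\Omega(n))$, using $E[Y]=\Theta(n)$. On the complementary event, $Y \ge \tfrac12 E[Y] \ge \frac{\eps}{10}(n+1)\bigl(\frac{\eps}{5}-O(1/n)\bigr)^{5/\eps}$, so $|f(R_t)| \le (n+1) - Y \le \bigl(1 - \tfrac{1}{10}\eps(\tfrac15\eps - O(1/n))^{5/\eps}\bigr)(n+1)$, matching the claim; the precise $-\frac2n$ inside the base merely reflects the exact boundary cut-off chosen for $M$.

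The routine parts are the per-value probability formula and the McDiarmid application. The main obstacle, and the genuinely clever step, is guaranteeing that enough missing values simultaneously lie in the middle region \emph{and} have only a bounded number of neighbouring individuals; this is exactly what the double-counting bound $\sum_k (n_{k-1}+n_{k+1}) \le 2(n+1)$ together with the threshold $5/\eps$ deliver. One must also verify that $E[Y]=\Theta(n)$ rather than merely positive, so that concentration costs only an $\exp(-\Omega(n))$ factor.
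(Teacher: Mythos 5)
Your proposal is correct and follows essentially the same route as the paper's proof: restrict to missing front values in the middle region, use the double-counting/Markov argument to discard the at most $\frac{2}{5}\eps(n+1)$ values with more than $5/\eps$ neighbouring parents (the paper's set $F'$), lower-bound the non-coverage probability of each surviving value by $(\tfrac{\eps}{5}-O(\tfrac1n))^{5/\eps}$, and convert the $\Theta(n)$ expected shortfall into a high-probability bound via the method of bounded differences. The only cosmetic difference is that you write the exact product formula $(\tfrac{k-1}{n})^{n_{k-1}}(\tfrac{n-k-1}{n})^{n_{k+1}}$ for the non-coverage probability where the paper bounds each factor uniformly; the conclusions coincide.
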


\begin{proof}
Let $F = \{(v,n-v) \mid v \in [0..n]\}$ be the Pareto front of \omm. For a value $(v,n-v) \in F$, we say that $(v-1,n-v+1)$ and $(v+1,n-v-1)$ are neighbors of $(v,n-v)$ provided that they are in $[0..n]^2$. We write $(a,b) \sim (u,v)$ to denote that $(a,b)$ and $(u,v)$ are neighbors.

Let $\Delta = \lceil \frac 5 \eps \rceil - 1$ and let $F'$ be the set of values in $F$ such that more than $\Delta$ individuals in $P_t$ have a function value that is a neighbor of this value, that is,
\[
F' = \left\{(v,n-v) \in F \,\big|\, |\{x \in P_t \mid f(x) \sim (v,n-v)\}| \ge \Delta+1\right\}.
\]
Then $|F'| \le \frac 2 {\Delta+1} (n+1) \le \frac 25 \eps(n+1)$ as otherwise the number of individuals in our population could be bounded from below by 
\[
|F'| \tfrac 12 (\Delta+1) > \tfrac 2 {\Delta+1} (n+1) \cdot \tfrac 12 (\Delta+1) = n+1,
\] 
which contradicts our assumption $N=n+1$ (note that the factor of $\tfrac12$ accounts for the fact that we may count each individual twice). 

Let $M= F \setminus f(P_t)$ be the set of Pareto front values not covered by the current population. By assumption, $|M| \ge \eps(n+1)$. Let 
\[
M_1=\left\{(v,n-v) \in M \,\middle|\, v \in \left[\lfloor\tfrac15 \eps (n+1)\rfloor..n-\lfloor\tfrac15 \eps (n+1)\rfloor\right]\right\} \setminus F'.
\]
Then $|M_1| \ge |M| - 2 \lfloor\tfrac15 \eps (n+1)\rfloor- |F'| \ge \tfrac15 \eps (n+1)$.

We now argue that a constant fraction of the values in $M_1$ is not generated in the current generation. We note that via one-bit mutation, a given $(v,n-v) \in F$ can only be generated from an individual $x$ with $f(x) \sim (v,n-v)$. Let $(v,n-v) \in M_1$. Since $v \in [\lfloor\tfrac15 \eps (n+1)\rfloor..n-\lfloor\tfrac15 \eps (n+1)\rfloor]$, the probability that a given parent $x$ is mutated to some individual $y$ with $f(y) = (v,n-v)$ is at most 
\begin{align*}
\frac{n-\lfloor\tfrac15 \eps (n+1)\rfloor+1}{n} \le 1 - \frac15 \eps + \frac2n
\end{align*} 
since there are at most $n-\lfloor\tfrac15 \eps (n+1)\rfloor+1$ bit positions such that flipping them creates the desired value. Since $v \notin F'$, the probability that $Q_t$ (and thus $R_t$) contains no individual $y$ with $f(y) = (v,n-v)$, is at least
\begin{align*}
\left(1-\left(1 - \tfrac15\eps + \tfrac2n\right)\right)^{\Delta} \ge (\tfrac15\eps - \tfrac 2n)^{5/\eps} := p.
\end{align*} 
Let $X = |F \setminus f(R_t)|$ denote the number of Pareto front values not covered by $R_t$. We have $E[X] \ge |M_1|p \ge \frac 15 \eps p (n+1)$. The random variable $X$ is functionally dependent on the $N = n+1$ random decisions of the $N$ mutation operations, which are stochastically independent. Changing the outcome of a single mutation operation changes $X$ by at most $1$. Consequently, $X$ satisfies the assumptions of the method of bounded differences~\cite{McDiarmid89} (also to be found in~\cite[Theorem~1.10.27]{Doerr20bookchapter}). Hence the classic additive Chernoff bound applies to $X$ as if it was a sum of $N$ independent random variables taking values in an interval of length~$1$. In particular, the probability that $X \le \frac 1{10} \eps p (n+1) \le \frac 12 E[X]$ is at most $\exp(-\Omega(n))$.
\end{proof}

We now turn to the other main argument, which is that when the current population covers the Pareto front to a large extent, then the selection procedure of the \mbox{NSGA-II} will remove individuals in such a way from $R_t$  that at least some constant fraction of the Pareto front is not covered by~$P_{t+1}$. The key arguments to show this claim are the following. When a large part of the front is covered by~$P_t$, then many points are only covered by a single individual (since the population size equals the size of the front). With some careful counting, we derive from this that close to two thirds of the positions on the front are covered exactly twice in the combined parent and offspring population $R_t$ and that the corresponding individuals have the same crowding distance. Since these are roughly $\frac 43 (n+1)$ individuals appearing equally preferable in the selection, a random set of at least roughly $\frac 13 (n+1)$ of them will be removed in the selection step. In expectation, this will remove both individuals from a constant fraction of the points on the Pareto front. Again, the method of bounded differences turns this expectation into a statement with probability $1 - \exp(-\Omega(n))$.

\begin{lemma}
Let $\eps > 0$ be a sufficiently small constant. Consider optimizing the \omm benchmark via the \mbox{NSGA-II} applying one-bit mutation once to each individual. Let the population size be $N=n+1$. Assume that the current population $P_t$ covers $|f(P_t)| \ge (1-\eps)(n+1)$ elements of the Pareto front. Then with probability at least $1-\exp(-\Omega(n))$, the next population $P_{t+1}$ covers less than $(1-0.01)(n+1)$ elements of the Pareto front.
\label{lem:wPf}
\end{lemma}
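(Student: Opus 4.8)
The plan is to track the Pareto front points individually and show that the combined population $R_t$ covers a constant fraction of them by exactly two \emph{equally preferable} individuals, so that the random tie-breaking in the truncation selection removes both of them for a constant fraction of these points, leaving that many points uncovered in $P_{t+1}$. The first step is to exploit that $N=n+1$ equals the front size: since $|f(P_t)|\ge(1-\eps)(n+1)$, the population has at most $\eps(n+1)$ surplus individuals, hence at most $\eps(n+1)$ front points are covered more than once and at least $(1-2\eps)(n+1)$ points are covered by exactly one individual (``singletons''). Throughout I will also use the immediate reduction that $P_{t+1}\subseteq R_t$, so if $R_t$ itself were to miss $0.01(n+1)$ points we would already be done.

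Second, the structural heart of the argument analyzes the mutation step. Each parent produces one offspring that is a front-neighbour, and a parent at position $(v,n-v)$ (i.e.\ with $v$ zeros) moves ``left'' to $(v-1,\cdot)$ with probability $v/n$ and ``right'' to $(v+1,\cdot)$ with probability $(n-v)/n$. A singleton point $(v,n-v)$ whose two neighbours are also singletons keeps its parent in $R_t$ and is additionally hit by the left neighbour's offspring with probability $(n-v+1)/n$ and by the right neighbour's offspring with probability $(v+1)/n$; it is therefore covered exactly twice if and only if it receives exactly one such offspring, an event of probability $\approx(1-t)^2+t^2$ with $t=v/n$. Summing over all positions and using $\int_0^1\big((1-t)^2+t^2\big)\,dt=\tfrac23$, the expected number of such \emph{good} points is $(\tfrac23-O(\eps))(n+1)$. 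Each good point is useful for two reasons: being covered exactly twice with both neighbours present and interior, a direct crowding-distance computation exactly as in the proofs of Lemma~\ref{lem:keep} and Lemma~\ref{lem:others} shows that both of its covering individuals carry the \emph{same} crowding distance $d=2/(v_1^{\max}-v_1^{\min})$; and removing both of them from $R_t$ leaves the point uncovered in $P_{t+1}$. Since the number of good points is a function of the $n+1$ independent mutations, each changing it by $O(1)$, the method of bounded differences~\cite{McDiarmid89} yields $G\ge(\tfrac23-O(\eps))(n+1)$ good points with probability $1-\exp(-\Omega(n))$.

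Third comes the selection. All of $R_t$ has rank one, so the truncation keeps the $n+1$ individuals of largest crowding distance. The $2G\ge(\tfrac43-O(\eps))(n+1)$ good individuals all have crowding distance exactly $d$, hence the number $H_{\ge d}$ of individuals with crowding distance at least $d$ is at least $\tfrac43(n+1)$. Writing $m_d\le H_{\ge d}$ for the size of the pool of crowding-distance-exactly-$d$ individuals, the selection discards uniformly at random a subset of size $D=H_{\ge d}-(n+1)\ge\tfrac13(n+1)$ from this pool (if instead more than $n+1$ individuals have crowding distance strictly above $d$, the whole $d$-pool is discarded and the claim follows at once), so the discard fraction is $D/m_d\ge 1-(n+1)/H_{\ge d}\ge\tfrac14$. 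For a fixed good point both of its two $d$-individuals are discarded with probability $D(D-1)/(m_d(m_d-1))\approx(D/m_d)^2\ge\tfrac1{16}-o(1)$, so the expected number $U$ of good points that become uncovered satisfies $E[U]\ge(\tfrac1{16}-o(1))G\ge 0.04(n+1)$. Conditioned on $R_t$, the discarded set is a uniform random subset of fixed size, so $U$ is again a bounded-difference function of the random tie-break, and a second concentration argument gives $U>0.01(n+1)$ with probability $1-\exp(-\Omega(n))$. Hence $|f(P_{t+1})|\le(n+1)-U<(1-0.01)(n+1)$, and a union bound over the two failure events completes the proof.

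I expect the main obstacle to be the second step: setting up the position-wise mutation analysis so that the constant $\tfrac23$ genuinely emerges, while \emph{simultaneously} guaranteeing that the two covering individuals of each good point share the same crowding distance $d$ (so that they are truly interchangeable under the random tie-break) and that the good individuals alone already force $H_{\ge d}\ge\tfrac43(n+1)$. The two separate sources of randomness—the mutations (which fix $R_t$) and the uniform tie-break (which fixes $P_{t+1}$ from $R_t$)—must be handled by two independent bounded-difference applications, and the delicate part is the bookkeeping that keeps all the $O(\eps)$ and $o(1)$ losses small enough that the final expected uncovered fraction ($\approx 0.04$) stays safely above the target $0.01$.
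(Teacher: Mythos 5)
Your proposal is correct and follows essentially the same route as the paper's proof: counting singleton-covered points whose neighbours are also singletons, the double-coverage probability $\approx (1-t)^2+t^2$ integrating to $2/3$, the observation that the two covering individuals of such a point share the same crowding distance $d$ so that a random discard of roughly a quarter of the $d$-pool uncovers a constant fraction of these points, and two applications of the method of bounded differences (one for the mutations, one for the tie-breaking). The only cosmetic difference is in the selection step, where you compute the hypergeometric probability $D(D-1)/(m_d(m_d-1))$ directly, while the paper pessimistically switches to sampling with replacement so as to work with independent random variables before invoking the concentration bound.
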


\begin{proof}
Let $U$ be the set of Pareto front values that have exactly one corresponding individual in $P_t$, that is, for any $(v,n-v) \in U$, there exists only one $x \in P_t$ with $f(x) = (v,n-v)$. We first note that $|U| \ge (1-2\eps)(n+1)$ as otherwise there would be at least
\begin{align*}
2&{}\left(|f(P_t)| - |U|\right) + |U| = 2|f(P_t)|-|U| \\
&>{} 2(1-\eps)(n+1) -(1-2\eps) (n+1)= n+1
\end{align*} 
individuals in $P_t$, which contradicts our assumption $N=n+1$. 

Let $U'$ denote the set of values in $U$ which have all their neighbors also in $U$. 
Since each value not in $U$ can prevent at most two values in $U$ from being in $U'$, we have
\begin{equation*}
\begin{split}
|U'| &\ge{} |U| - 2(n+1-|U|) = 3|U| -2(n+1) \\
&\ge{} 3(1-2\eps)(n+1) - 2(n+1)
= \left(1-6\eps\right)(n+1).
\label{eq:ur'}
\end{split}
\end{equation*} 
We say that $(v,n-v)$ is double-covered by $R_t$ if there are exactly two individuals in $R_t$ with function value $(v,n-v)$. Noting that via one-bit mutation a certain function value can only be generated from the individuals corresponding to the neighbors of this function value, we see that a given $(v,n-v) \in U'$ with $v \in [1..n-1]$ is double-covered by $R_t$ with probability exactly 
$$
p_v = \frac{n-(v-1)}{n}+\frac{v+1}{n}-2\frac{n-(v-1)}{n}\frac{v+1}{n}=1-\frac{2v}{n}+2\frac{v^2-1}{n^2}.
$$
Thus the expected number of double-coverages in $U'$ is at least
\begin{align*}
\sum_{v \in [1..n-1]: \atop (v,n-v)\in U'} p_v 
={}& \left(\sum_{v=1}^{n-1} p_v\right) -\left(\sum_{v \in [1..n-1] : \atop (v,n-v)\notin U'}  p_v\right)\\
\ge{}& \left(\sum_{v=1}^{n-1}1-\frac{2v}{n}+2\frac{v^2-1}{n^2}\right) -\left(\sum_{v \in [1..n-1]: \atop (v,n-v)\notin U'} 1\right)\\
\ge{}& (n-1)-\frac{2}{n}\frac{(n-1)n}{2}+\frac{2}{n^2}\left(\frac{(n-1)n(2(n-1)+1)}{6}-(n-1)\right)\\
&{}-6\eps(n+1)\\
= {}& \frac{n-1}{n^2}\frac{2n^2-n-6}{3}-6\eps(n+1) 
= (\tfrac 23 - 6\eps) (n+1) - O(1).
\end{align*}

Denote by $U''$ the set of values in $U'$ that are double-covered by $R_t$ and note that we have just shown $E[|U''|] \ge (\frac 23 - 6\eps)(n+1) - O(1)$. The number $m := |U''|$ of double-covered elements is functionally dependent on the random decisions taken (independently) in the $N$ mutation operations. Each mutation operation determines one offspring and thus can change the number of double-covered values by at most $2$. Consequently, we can use the method of bounded differences~\cite{McDiarmid89} and obtain that $|U''|$ is at least $(\tfrac 23 - 8 \eps) (n+1)$ with probability at least $1 - \exp(-\Omega(n))$. We condition on this in the remainder.

Our next argument is that these double-coverages correspond to approximately $\frac 43 (n+1)$ individuals in~$R_t$ that have the same crowding distance. Consequently, the selection procedure has to discard at least roughly $\frac 13 (n+1)$ of them, randomly chosen, and this will lead to a decent number of values in $U''$ that are not covered anymore by $P_{t+1}$. 

To make this precise, let $R''$ denote the individuals $x$ in $R_t$ such that $f(x) \in U''$. By construction, there are exactly two such individuals for each value in $U''$, hence $|R''| = 2m$. Further, both neighboring values are also present in $f(R_t)$. Consequently, each $x \in R''$ has crowding distance  (in $R_t$) exactly $d = \frac{1}{v_1^{\max}-v_1^{\min}} + \frac{1}{v_2^{\max}-v_2^{\min}}$. We recall that the selection procedure (since all ranks are equal to one) first discards all individuals with crowding distance less than $d$ since these are at most $|R_t| - |R''| \le 2(n+1-\tilde m) = (2 - \frac 43 + 16\eps) (n+1) + O(1)$, which is less than $N$ for $n$ large and $\eps$ small enough. Then, randomly, the selection procedure discards a further number of individuals from all individuals with crowding distance exactly $d$ so that exactly $N$ individuals remain. For $N$ individuals to remain, we need that at least  $k := |R''| - N$ individuals from $R''$ are discarded.

To ease the calculation, we first reduce the problem to the case that  
$|R''| = 2 \tilde m$. Indeed, let $U'''$ be any subset of $U''$  having 
cardinality exactly $\tilde m$ and let $R'''$ be the set of individuals $x \in R_t$ with $f(x) \in U'''$. Then $R''' \subseteq R''$ and $|R'''| = 2\tilde m$. With the same argument as in the previous paragraph we see that the selection procedure has to remove at least $\tilde k := 2\tilde 
m - N$ elements from $R'''$. We thus analyze the number of elements of 
$U'''$ that become uncovered when we remove a random set of $\tilde k$ 
individuals from $R'''$, knowing that this is a lower bound for the 
number of elements uncovered in $U''$, both because the number of 
individuals removed from $R'''$ can be higher than $\tilde k$ and 
because the removal of elements in $R'' \setminus R'''$ can also 
lead to uncovered elements in $U''$. 

We take a final pessimistic simplification, and this is that we select $\tilde k$ elements from $R'''$ with replacement and remove these individuals from $R'''$. Clearly, this can only lower the number of removed elements, hence our estimate for the number of uncovered elements is also valid for the random experiment without replacement (where we choose exactly $\tilde k$ elements to be removed).  

For this random experiment the probability for uncovering a position in $U'''$ is at least 
\begin{align*}
1-{}&{}2\left(1-\frac{1}{2\tilde m}\right)^{\tilde k}+\left(1 - \frac{1}{2\tilde m}\right)^{2\tilde k} 
\\
&={} 1-2\exp\left(-\frac{\tilde k}{2 \tilde m}\right) + \exp\left(-\frac{\tilde k}{\tilde m}\right) - O\left(\frac 1 n\right) \\
&\ge{} 1- 2\exp\left(-1+\frac 34 \frac{1}{1-12\eps}\right) + \exp\left(-2 + \frac 64\frac{1}{1-12\eps}\right) - O\left(\frac 1n\right) :=p,
\end{align*}
where we used the estimate $\frac{\tilde k}{2\tilde m} = 1 - \frac{n+1}{2\tilde m} = 1 - \frac 34 \frac{1}{1-12\eps}$ and the fact that $\tilde m = \Theta(n)$.

Let $Y$ denote the number of elements of $U'''$ uncovered in our random experiment. We note that $1 - 2\exp(-1/4) + \exp(-1/2) \ge 0.04892$. Hence when $n$ is large enough and $\eps$ was chosen as a sufficiently small constant, then  
\[
E[Y] = p \tilde{m} \ge  0.02 (n+1).
\]
The random variable $Y$ is functionally dependent on the $\tilde k$ selected individuals, which are stochastically independent. Changing the outcome of a single selected individual changes $Y$ by at most $1$. Consequently, $Y$ satisfies the assumptions of the method of bounded differences~\cite{McDiarmid89}. The classic additive Chernoff bound thus applies to $Y$ as if it was a sum of $k=\Omega(n)$ independent random variables taking values in an interval of length~$1$. In particular, the probability that $Y \le 0.01 (n+1) \le \frac12 E[Y]$ is at most $\exp(-\Omega(n))$.
\end{proof}

Combining Lemmas~\ref{lem:Rt} and~\ref{lem:wPf}, we have the following exponential runtime result.
\begin{theorem} 
Consider optimizing \omm via the \mbox{NSGA-II} applying one-bit mutation once to each individual. Let the population size be $N=n+1$. There are a positive constant $\gamma$ and a time $T = \exp(\Omega(n))$ such that with probability $1 - \exp(-\Omega(n))$, in each of the first $T$ iterations at most a fraction of $1-\gamma$ of the Pareto front is covered by $P_t$.
\label{thm:lb}
\end{theorem}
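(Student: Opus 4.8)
Theorem~\ref{thm:lb} asserts an exponential-time failure result, and the plan is to assemble it from the two lemmas already proven, which respectively handle the ``high coverage'' and ``low coverage'' regimes.

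\textbf{Setting up the fixed point.} First I would fix the constant $\eps$ small enough that both Lemma~\ref{lem:Rt} and Lemma~\ref{lem:wPf} apply. Lemma~\ref{lem:wPf} says that if $|f(P_t)| \ge (1-\eps)(n+1)$, then with probability $1-\exp(-\Omega(n))$ the \emph{next} population satisfies $|f(P_{t+1})| < (1-0.01)(n+1)$; in particular it drops below the $(1-\eps)$ threshold provided $\eps < 0.01$. Lemma~\ref{lem:Rt} says that if $|f(P_t)| \le (1-\eps)(n+1)$, then with probability $1-\exp(-\Omega(n))$ the combined population $R_t$ — and hence $P_{t+1} \subseteq R_t$ — misses a constant fraction $c := \frac{1}{10}\eps(\tfrac15\eps - \tfrac2n)^{5/\eps}$ of the front, i.e.\ $|f(P_{t+1})| \le (1-c)(n+1)$. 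The plan is to choose $\gamma := \min\{0.01, c, \eps\}$ (shrinking $\eps$ if needed so that $\eps \le \min\{0.01,c\}$), so that in \emph{both} regimes a single iteration keeps the coverage at most $(1-\gamma)(n+1)$ with failure probability $\exp(-\Omega(n))$.

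\textbf{The induction and union bound.} The key observation is a case distinction that makes the bad event self-sustaining. Suppose $|f(P_t)| \le (1-\gamma)(n+1)$. If $|f(P_t)| \ge (1-\eps)(n+1)$, apply Lemma~\ref{lem:wPf}; otherwise $|f(P_t)| < (1-\eps)(n+1) \le (1-\gamma)(n+1)$ puts us in the hypothesis of Lemma~\ref{lem:Rt}. Either way, with probability $1-\exp(-\Omega(n))$ we conclude $|f(P_{t+1})| \le (1-\gamma)(n+1)$. Thus, conditioned on the coverage being at most $(1-\gamma)(n+1)$ at step $t$, it remains at most $(1-\gamma)(n+1)$ at step $t+1$ except with probability $\exp(-\Omega(n))$. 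Starting from the random initial population (which covers only $O(1)$ or at most a constant fraction of the front whp, and certainly at most $(1-\gamma)(n+1)$ points with overwhelming probability for a suitable $\gamma$, since a uniformly random population of size $n+1$ misses a constant fraction of the $n+1$ front values in expectation and by concentration), I would apply a union bound over $T = \exp(\Omega(n))$ iterations. Choosing $T$ as a sufficiently small exponential, the sum of $T$ failure probabilities $T \cdot \exp(-\Omega(n))$ is still $\exp(-\Omega(n))$, so with probability $1-\exp(-\Omega(n))$ none of the single-step failures occur and the coverage stays below $(1-\gamma)(n+1)$ throughout.

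\textbf{Main obstacle.} The two lemmas are tailored so the induction threads cleanly, so the conceptual core is already done; the delicate point is purely bookkeeping of constants. I must verify that the \emph{same} $\gamma$ works as both the invariant maintained and the conclusion: Lemma~\ref{lem:wPf} outputs the fixed constant $0.01$ while Lemma~\ref{lem:Rt} outputs the $\eps$-dependent $c$, and both must dominate the invariant $\gamma$ while $\gamma$ itself must exceed the coverage gap $\eps$ that triggers the Lemma~\ref{lem:Rt} branch. Taking $\gamma = \eps$ after shrinking $\eps$ below $\min\{0.01,c\}$ resolves this, since then the ``low coverage'' branch is exactly $|f(P_t)| \le (1-\gamma)(n+1) = (1-\eps)(n+1)$, matching Lemma~\ref{lem:Rt}'s hypothesis. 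The only other care needed is the base case: I would note that the initial population is generated uniformly at random, so a standard concentration argument (or simply the fact that $R_0 = P_0 \cup Q_0$ already falls under the lemmas) establishes $|f(P_0)| \le (1-\gamma)(n+1)$ with probability $1-\exp(-\Omega(n))$, after which the inductive step carries the invariant forward.
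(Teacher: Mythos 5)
Your proposal follows essentially the same route as the paper: combine Lemma~\ref{lem:wPf} (high-coverage regime) and Lemma~\ref{lem:Rt} (low-coverage regime) into a per-iteration statement that $P_{t+1}$ misses a constant fraction of the front with probability $1-\exp(-\Omega(n))$, handle $P_0$ by concentration, and finish with a union bound over $T=\exp(\Omega(n))$ iterations; this is correct. One remark: your ``main obstacle'' is not actually an obstacle --- since the two hypotheses $|f(P_t)|\ge(1-\eps)(n+1)$ and $|f(P_t)|\le(1-\eps)(n+1)$ are exhaustive, no inductive invariant is needed and there is no requirement that $\gamma\ge\eps$; you may simply take $\gamma=\min\{0.01,\tilde\eps\}$ with $\tilde\eps$ a constant lower bound on $\tfrac1{10}\eps(\tfrac15\eps-\tfrac2n)^{5/\eps}$. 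This matters because your proposed fix, ``shrink $\eps$ below $\min\{0.01,c\}$,'' is impossible as stated: $c=c(\eps)$ decays like $\eps^{\Theta(1/\eps)}$ and hence is always smaller than $\eps$, so no choice of $\eps$ satisfies $\eps\le c(\eps)$ --- fortunately the condition is simply not needed.
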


\begin{proof}
  Let $\eps$ be a small constant rendering the claims of Lemmas~\ref{lem:Rt} and~\ref{lem:wPf} valid. Assume that $n$ is sufficiently large. Let $\tilde \eps = (\frac 1 {10} \eps)^{5/\eps+1}$. By a simple Chernoff bound, we note that a random initial individual~$x$ satisfies $\frac 14n \le f_1(x) \le \frac 34 n$ with probability $1 - \exp(\Omega(n))$. Taking a union bound over the $n+1$ initial individuals, we see that the initial population $P_0$ with probability $1 - \exp(-\Omega(n))$ covers at most half of the Pareto front. 
	
	Let $t$ be some iteration. 	If $|f(P_t)| \ge (1-\eps)(n+1)$, then by Lemma~\ref{lem:wPf} with probability $1 - \exp(-\Omega(n))$ the next population $P_{t+1}$ covers less than $(1 - 0.01)(n+1)$ values of the Pareto front. If $|f(P_t)| \le (1-\eps) (n+1)$, then by Lemma~\ref{lem:Rt} with probability $1 - \exp(-\Omega(n))$ we have $n+1 - |f(P_{t+1})| \ge \frac 1 {10} \eps (\frac15\eps - \tfrac 2n)^{5/\eps} (n+1) \ge \tilde \eps (n+1)$, where the last estimate holds when $n$ is sufficiently large.
	Consequently, for each generation $t$, the probability that $P_t$ covers more than $(1 - \min\{\tilde \eps, 0.01\}) (n+1)$ values of the Pareto front, is only $\exp(-\Omega(n))$. In particular, a union bound shows that for $T = \exp(\Theta(n))$ suitably chosen, with probability $1 - \exp(-\Omega(n))$ in all of the first $T$ iterations, the population covers at most $(1 - \min\{\tilde \eps, 0.01\}) (n+1)$ values of the Pareto front.
\end{proof}

\section{Experiments}
\label{sec:exp}

To complement our asymptotic results with runtime data for concrete problem sizes, we conducted the following experiments.

\subsection{Settings}

We use, in principle, the version of the \mbox{NSGA-II}  given by Deb (Revision~1.1.6), available at~\cite{DebNSGAII}, except that, as in our theoretical analysis, we do not use crossover. We re-implemented the algorithm in Matlab (R2016b). When a sorting procedure is used, we use the one provided by Matlab (and not randomized Quicksort as in Deb's implementation). The code is available at~\cite{ZhengNSGAII}.

Our theoretical analysis above covers four parent selection strategies and two mutation operators. In the interest of brevity, with the exception of the data presented in Figure~\ref{fig:ratiosVars} we concentrate in our experiments on one variant of the algorithm, namely we use two-permutation binary tournament selection (as proposed in~\cite{DebPAM02}) and standard bit-wise mutation with mutation rate~$\frac 1n$ (which is the most common mutation operator in evolutionary computation). 
We use the following experimental settings.
\begin{itemize}
\item Problem size $n$: $100,200,300,$ and $400$ for \omm, and $30,60,90,$ and $120$ for \lotz.
\item Population size $N$: Our theoretical analyses (Theorems~\ref{thm:ommbinary} and~\ref{thm:lotzbinary}) showed that the \mbox{NSGA-II} find the optima of \omm and \lotz efficiently for population sizes of at least $N^* = 4(n+1)$. We use this value also in the experiments. We also use the value $N = 2N^*$, for which our theory results apply, but our runtime guarantees are twice as large as for $N^*$ (when making the implicit constants in the results visible). We also use the smaller population sizes $2(n+1)$ and $1.5(n+1)$ for \omm and $2(n+1)$ for \lotz. For these values, we have no proven result, but it is not uncommon that mathematical runtime analyses cannot cover all efficient parameter setting, and in fact, we shall observe a good performance in these experiments as well (the reason why we do not display results for $N = 1.5(n+1)$ for \lotz is that here indeed the algorithm was not effective anymore). Finally, we conduct experiments with the population size $N = n+1$, which is large enough to represent the full Pareto front, but for which we have proven the \mbox{NSGA-II} to be ineffective (on \omm and when letting each parent create an offspring via one-bit mutation).
\item Number of independent runs: $50$ for the efficient population sizes in Section~\ref{subsec:eff} and $20$ for more time-consuming experiments with inefficient population sizes in Sections~\ref{subsec:ine} to~\ref{subsec:small}. These numbers of independent runs have already shown good concentrations.
\end{itemize}

\subsection{Efficient Population Sizes}\label{subsec:eff}

Figure~\ref{fig:runtimeomm} displays the runtime (that is, the number of fitness evaluations until the full Pareto front is covered) of the \mbox{NSGA-II} with population sizes large enough to allow an efficient optimization, together with the runtime of the (parameter-less) GSEMO. 

\begin{figure}[!ht]
\centering
\subfloat[\label{fig:ra}]{
\includegraphics[width=4.8in]{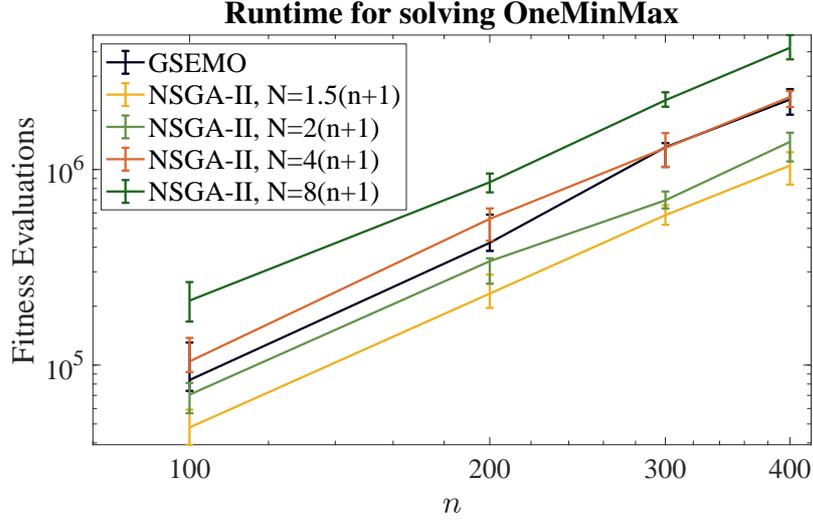}}\\
\subfloat[\label{fig:rb}]{\includegraphics[width=4.8in]{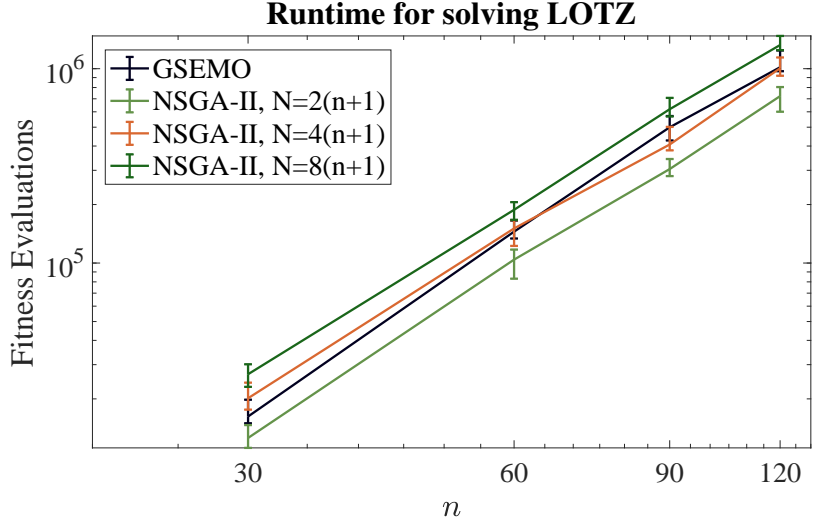}} 
\caption{The number of function evaluations for the \mbox{NSGA-II} (binary tournament selection, standard bit-wise mutation) with different population sizes and for the GSEMO optimizing \omm (\ref{fig:ra}) and  \lotz (\ref{fig:rb}). Displayed are the median (with $1$st and $3$rd quartiles) in 50 independent runs. }
\label{fig:runtimeomm}
\end{figure}

This data confirms that the \mbox{NSGA-II} can efficiently cover the Pareto front of \omm and \lotz when using a population size of at least $N^*$. The runtimes for $N = 2N^*$ are clearly larger than for $N^*$,  but by a factor slightly less than $2$ for both problems. The data for the population sizes smaller than $N^*$ indicates that also for these parameter settings the \mbox{NSGA-II} performs very well. 

Comparing the \mbox{NSGA-II} to the GSEMO, we observe that the \mbox{NSGA-II} with a proper choice of the population size shows a better performance. This is interesting and somewhat unexpected, in particular, for simple problems like \omm and \lotz. It is clear that the \NSGA using tournament selection chooses extremal parents with higher rate. More precisely, each individual appears twice in a tournament. For an extremal value on the Pareto front, at least one individual has an infinite crowding distance, making it the tournament winner almost surely (except in the rare case that the tournament partner has infinite crowding distance as well). Consequently, for each extremal objective value, the \NSGA mutates at least $2 - o(1)$ individuals per iteration. This is twice the average rate. In contrast, the GSEMO treats all individuals equally. This advantage of the \NSGA comes at the price of a larger population, hence a larger cost per iteration. We note that the \mbox{NSGA-II} throughout the run works with a population of size~$N$, whereas the GSEMO only keeps non-dominated individuals in its population. Consequently, in particular in the early stages of the optimization process, each iteration takes significantly fewer fitness evaluations.

\subsection{Inefficient Population Sizes}\label{subsec:ine}

When the population size is small, we do not have the result that points on the front cannot be lost (Lemmas~\ref{lem:keep} and~\ref{lem:keeplotz}) and the proof of 
Theorem~\ref{thm:lb} shows that indeed we can easily lose points on the front, leading to a runtime at least exponential in~$n$ when $N= n+1$. In this subsection, we analyze this phenomenon experimentally. As discussed earlier, we first concentrate on the \mbox{NSGA-II} with two-permutation tournament selection and standard bit-wise mutation. 

Since it is hard to show an exponential runtime experimentally, we do not run the algorithm until it found the full Pareto front (this would be possible only for very small problem sizes), but we conduct a slightly different experiment for reasonable problem sizes which also strongly indicates that the \mbox{NSGA-II} has enormous difficulties in finding the full front. We ran the \mbox{NSGA-II} for $3000$ generations for \omm and $5000$ generations for \lotz and measured for each generation the ratio by which the Pareto front is covered. This data is displayed in  Figure~\ref{fig:ratiosomm}. We see clearly that the coverage of the Pareto front steeply increases at first, but then stagnates at a constant fraction clearly below one (around $80$\% for \omm and between 50\% and 60\% for \lotz) and this in a very concentrated manner. From this data, there is no indication that the Pareto front will be covered anytime soon. 

\begin{figure}[!ht]
\centering
\subfloat[\label{fig:raa}]{
\includegraphics[width=4.8in]{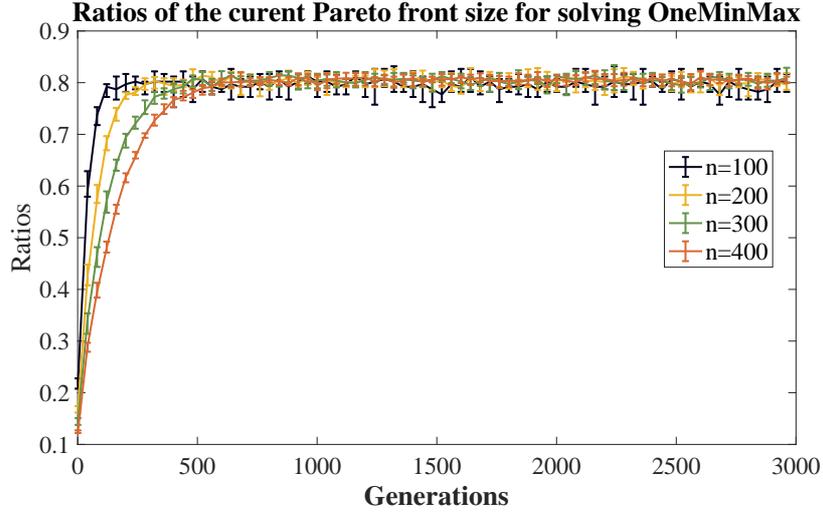}} \\
\subfloat[\label{fig:rab}]{\includegraphics[width=4.8in]{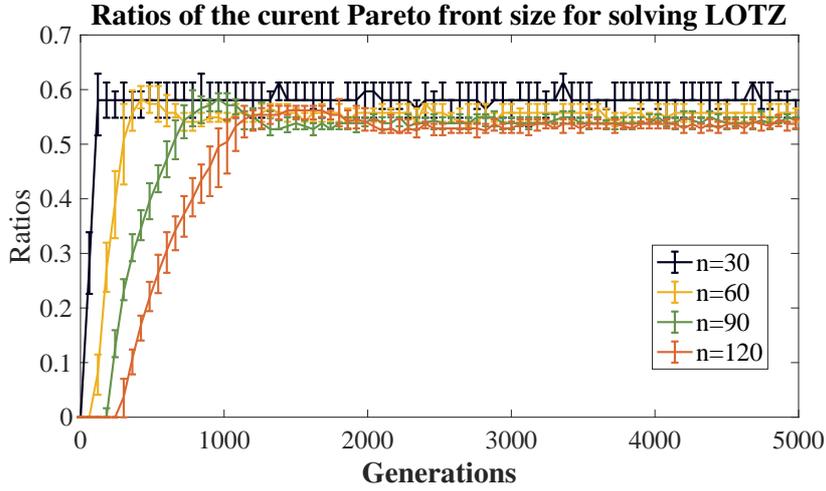}} 
\caption{Ratio of the coverage of the Pareto front by the current population of the \mbox{NSGA-II} (binary tournament selection, standard bit-wise mutation) with population size $N=n+1$ for solving \omm (\ref{fig:raa}) and \lotz (\ref{fig:rab}). Displayed are the median (with $1$st and $3$rd quartiles) in 20 independent runs.}
\label{fig:ratiosomm}
\end{figure}

We said in Section~\ref{sec:lb} that we were optimistic that our negative result for small population size would also hold for all other variants of the \mbox{NSGA-II}. To experimentally support this claim, we now run all variants of the \mbox{NSGA-II} discussed in this work on \omm with problem size $n=200$, $20$ times for $3000$ iterations. In Figure~\ref{fig:ratiosVars}, we see the ratios of the coverage of the Pareto front by the populations in the $20$ runs and in iterations $[2001..3000]$ (that is, we regard together $20*1000$ populations). We see that all variants fail to cover a constant fraction of the Pareto. The precise constant is different for each variant. Most notable, we observe that the variants using standard bit-wise mutation cover the Pareto front to a lesser extent than those building on one-bit mutation. We do not have a clear explanation for this phenomenon, but we speculate that standard bit-wise mutation is harmed by its constant fraction of mutations that just create a copy of the parent. We would, however, not interpret the results in this figure as a suggestion to prefer one-bit mutation. As shown in~\cite{DoerrQ23tec}, with high probability the \mbox{NSGA-II} using one-bit mutation fails to find the Pareto front of the \ojzj benchmark, regardless of the runtime allowed.  

\begin{figure}[!ht]
\centering
\includegraphics[width=4.8in]{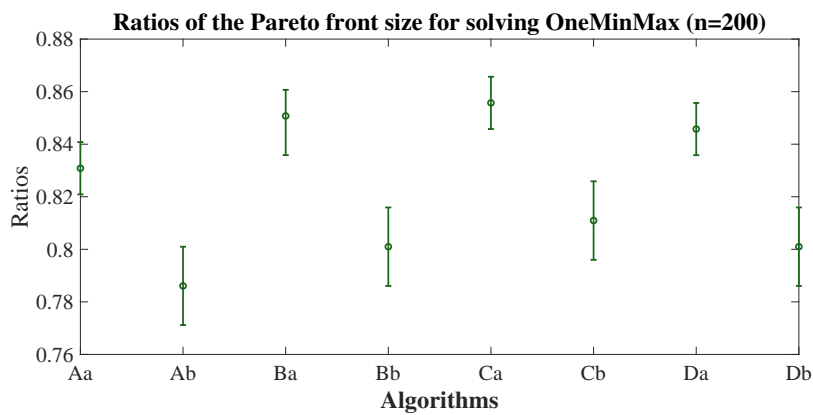} 
\caption{Ratios of the coverage of the Pareto front by the population of the different \mbox{NSGA-II} variants (using $A$ (selecting each individual as a parent once), $B$ ($N$ times choosing a parent uniformly at random), $C$ (independent binary tournaments), or $D$ (two-permutation binary tournaments) as the mating selection strategy, and using $a$ (one-bit mutation) or $b$ (standard bit-wise mutation) as the mutation strategy) with population size $N=n+1$ on the \omm with problem size $n=200$. Displayed are the median (with $1$st and $3$rd quartiles) in 20 independent runs and $[2001..3000]$ generations.}
\label{fig:ratiosVars}
\end{figure}

\subsection{Optimization With Small Population Sizes}\label{subsec:small}

In the previous subsection, we showed that the \NSGA with population size equal to the size of the Pareto front cannot cover the full Pareto front in a reasonable time. On the positive side, however, still a large fraction of the Pareto front was covered, e.g., around 80\% for the \omm problem. This could indicate that the \NSGA also with smaller population sizes is an interesting algorithm. This is what we brief{}ly discuss now. We shall not explore this question in full detail, but only to the extent that we observe a good indication that the \NSGA performs well also with small population sizes. We note that the subsequent work~\cite{ZhengD22gecco} took up this research question and discussed it in detail.

To understand how well the \NSGA performs with small population size $n+1$, we first regard how fast its population spreads out on the Pareto front. From the data in Figure~\ref{fig:genTwoEnds}, we see that also with this small population size, the \NSGA quickly finds the two extremal points $(0,n)$ and $(n,0)$ of the Pareto front. This fits our understanding of the algorithms. Since the two outer-most individuals in the population have infinite crowding distance and since there are at most four individuals with infinite crowding distance, these individuals will never be lost, even if the population size is relatively small. 

More interesting is the question how evenly the population is distributed on the Pareto front once the two extremal points are found. To this aim, we display in Figure~\ref{fig:ParetoCover} the function values of the populations after a moderate runtime in a run of the \mbox{NSGA-II}. In all eight datasets, the complete Pareto front was not found (as expected). However, the plots also show that in all cases, the front is well approximated by the population. 
Also, we note that the population contains only individuals on the Pareto front (which is trivially satisfied for \omm, but not so for \lotz). We note that the data from two individual runs displayed in the figure is representative. In all runs we never
encountered an interval of uncovered points of length longer than $6$ and $4$ respectively.

\begin{figure}[!ht]
\centering
\subfloat[\label{fig:ga}]{
\includegraphics[scale=0.28]{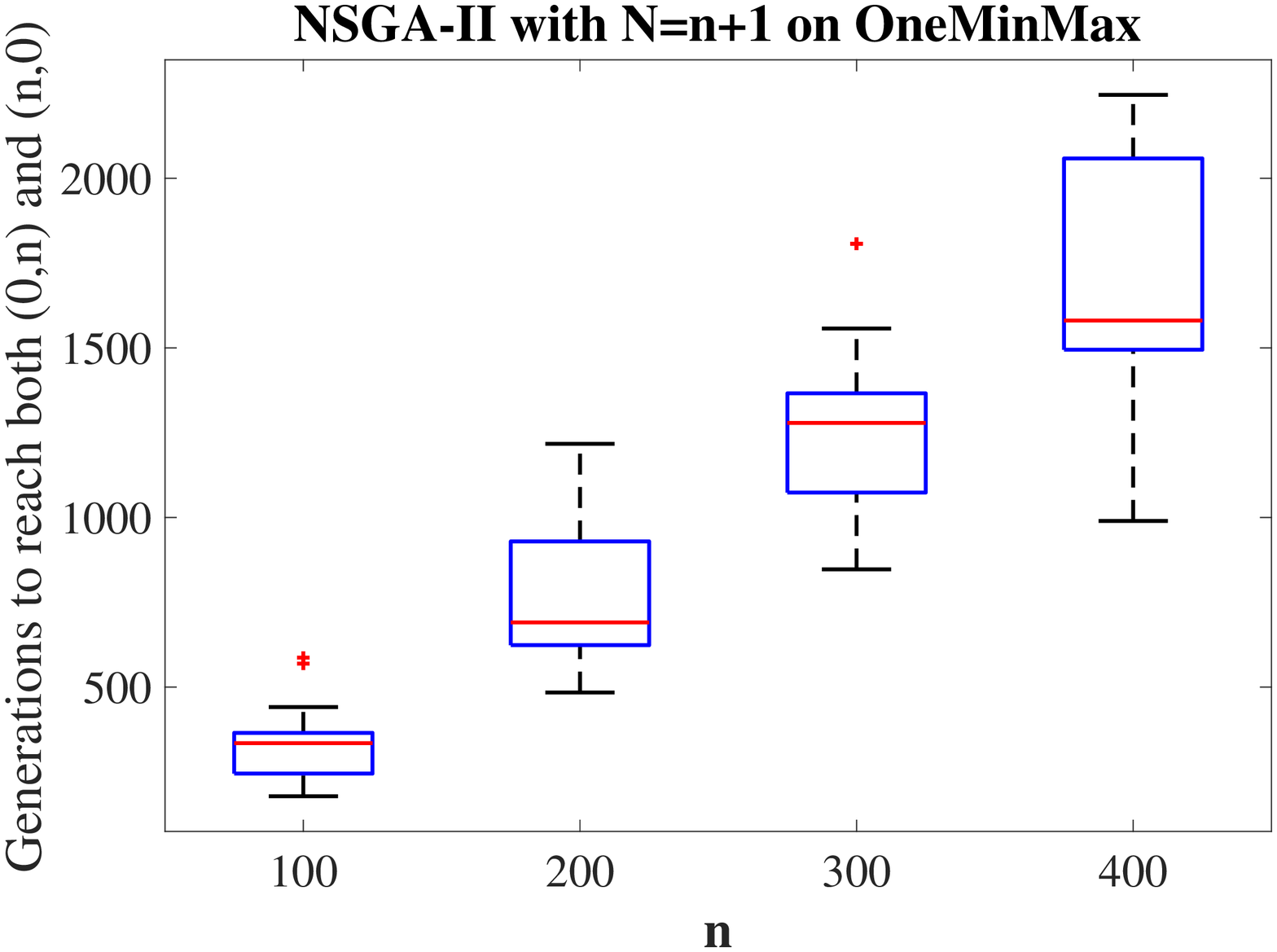}} 
  \hspace{0.05in}
  \subfloat[\label{fig:gb}]{\includegraphics[scale=0.29]{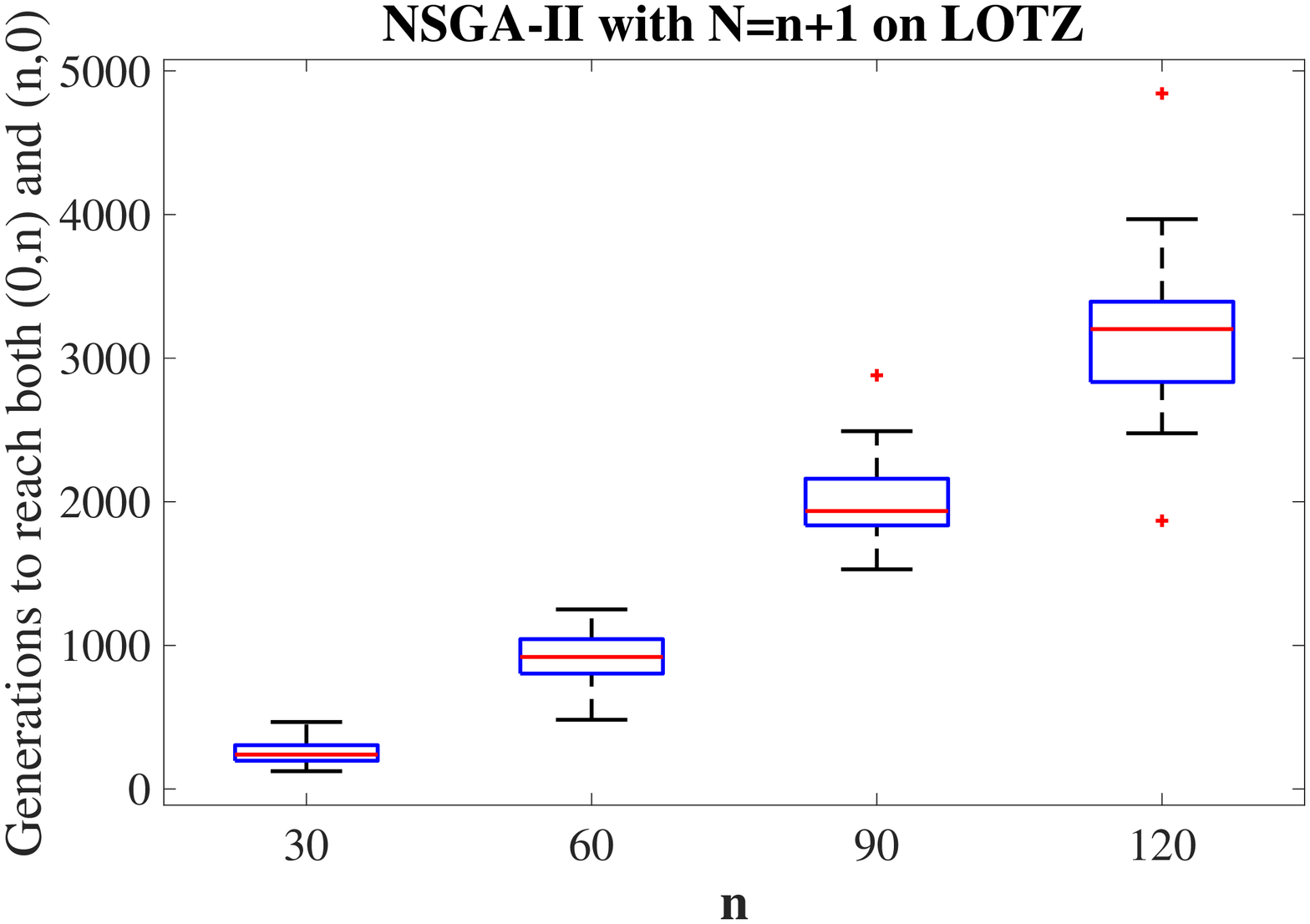}} 
\caption{First generation when both extreme function values $(0,n)$ and $(n,0)$ were contained in the population of the \mbox{NSGA-II} (binary tournament selection, standard bit-wise mutation, population size $N=n+1$) for \omm (\ref{fig:ga}) and \lotz (\ref{fig:gb}). }
\label{fig:genTwoEnds}
\end{figure}

\begin{figure}[!ht]
  \centering
  \subfloat[\label{fig:pa}]{\includegraphics[scale=0.35]{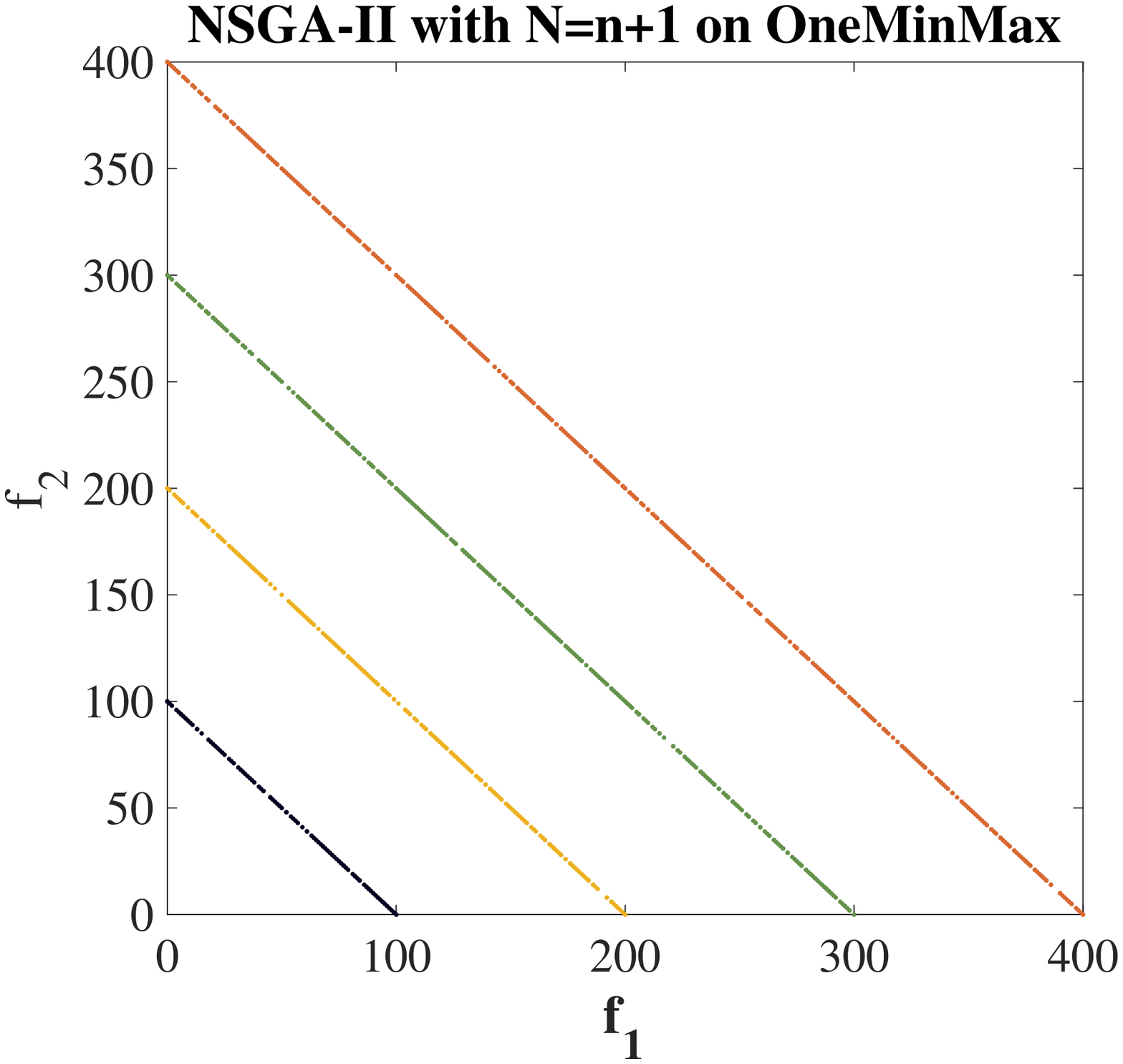}}
  \hspace{0.05in}
  \subfloat[\label{fig:pb}]{\includegraphics[scale=0.35]{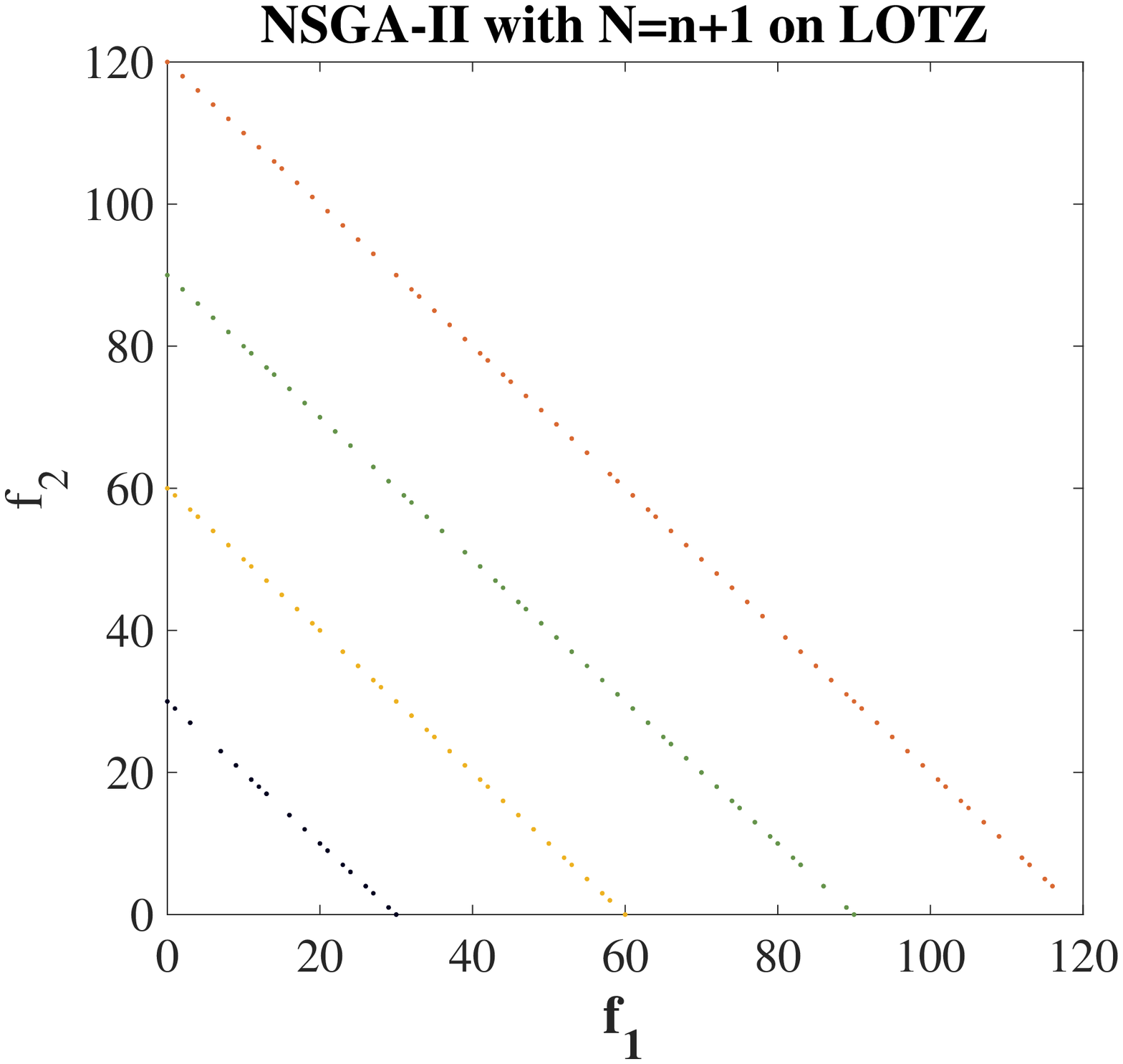}}
  \caption{The function values of the population $P_t$ for $t=3000$ when optimizing \omm (\ref{fig:pa}) and for $t=5000$ when optimizing \lotz (\ref{fig:pb})  via the \mbox{NSGA-II} (binary tournament selection, standard bit-wise mutation, population size $N=n+1$) in one typical run. Both plots show that this population size is not sufficient to completely cover the Pareto front, but it suffices to approximate very well the front. Different colors are for different problem sizes $n$, and $n=\{100,200,300,400\}$ for \omm and $n=\{30,60,90,120\}$ for \lotz. Also note that the Pareto front is $\{(i,n-i)\mid i\in[0..n]\}$.}
  \label{fig:ParetoCover}
\end{figure}

\section{Conclusion}\label{sec:con}

In this work, we conducted the first mathematical runtime analysis of the \mbox{NSGA-II}, which is the predominant framework in real-world multi-objective optimization. We proved that with a suitable population size, all variants of the \mbox{NSGA-II} regarded in this work satisfy the same asymptotic runtime guarantees as the previously regarded much simpler SEMO, GSEMO, and \sibea when optimizing the two benchmarks \omm and \lotz. The choice of the population size is important. We proved
 an exponential runtime when the population size equals the size of the Pareto front.

On the technical side, this paper shows that mathematical runtime analyses are feasible also for the \mbox{NSGA-II}. We provided a number of arguments to cope with the challenges imposed by this algorithm, in particular, the fact that points in the Pareto front can be lost and the parent selection via binary tournaments based on the rank and crowding distance. We are optimistic that these tools will aid future analyses of the \mbox{NSGA-II} (and in fact, they have already been used several times in subsequent work, see the discussion in the introduction).

\section*{Acknowledgments}
This work was supported by National Natural Science Foundation of China (Grant No. 62306086), Science, Technology and Innovation Commission of Shenzhen Municipality (Grant No. GXWD20220818191018001), Guangdong Basic and Applied Basic Research Foundation (Grant No. 2019A1515110177).

This work was also supported by a public grant as part of the Investissement d'avenir project, reference ANR-11-LABX-0056-LMH, LabEx LMH.

\newcommand{\etalchar}[1]{$^{#1}$}

\end{document}